\def\eqref#1{equation~\ref{#1}}
\def\1{\bm{1}}
\def\rv{{\textnormal{v}}}
\def\rx{{\textnormal{x}}}
\def\ry{{\textnormal{y}}}
\def\rz{{\textnormal{z}}}
\def\rvv{{\mathbf{v}}}
\def\rvx{{\mathbf{x}}}
\def\rvz{{\mathbf{z}}}
\def\vc{{\bm{c}}}
\DeclareMathAlphabet{\mathsfit}{\encodingdefault}{\sfdefault}{m}{sl}
\SetMathAlphabet{\mathsfit}{bold}{\encodingdefault}{\sfdefault}{bx}{n}
\newcommand{\E}{\mathbb{E}}
\DeclareMathOperator*{\argmax}{arg\,max}
\definecolor{bgcolor}{rgb}{0.95,0.95,0.92}
\definecolor{codegreen}{rgb}{0,0.6,0}
\definecolor{codegray}{rgb}{0.5,0.5,0.5}
\definecolor{codepurple}{rgb}{0.58,0,0.82}
\definecolor{codeblue}{rgb}{0.25,0.5,0.75}
\definecolor{codered}{rgb}{0.8,0.2,0.2}
\definecolor{darkviolet}{rgb}{0.58,0,0.82}
\title{Cliqueformer: Model-Based Optimization with Structured Transformers}
\author{
    Jakub Grudzien Kuba\textsuperscript{\rm 1},
    Pieter Abbeel\textsuperscript{\rm 1},
    Sergey Levine\textsuperscript{\rm 1}\\
}
\begin{document}

\maketitle

\begin{abstract}
Large neural networks excel at prediction tasks, but their application to design problems, such as protein engineering or materials discovery, requires solving offline model-based optimization (MBO) problems.
While predictive models may not directly translate to effective design, recent MBO algorithms incorporate reinforcement learning and generative modeling approaches.
Meanwhile, theoretical work suggests that exploiting the target function's structure can enhance MBO performance.
We present Cliqueformer, a transformer-based architecture that learns the black-box function's structure through functional graphical models (FGM), addressing distribution shift without relying on explicit conservative approaches. Across various domains, including chemical and genetic design tasks, Cliqueformer demonstrates superior performance compared to existing methods. 
We open-source our code at \url{https://github.com/znowu/cliqueformer-code}.
\end{abstract}

\section{Introduction}
Most of the common use cases of deep learning (DL) so far have taken the form of prediction tasks \citep{hochreiter1997long, he2016deep, krizhevsky2017imagenet}.
However, in many applications, such as protein synthesis or chip design, we might want to use powerful
models to instead solve \emph{optimization} problems. Clearly, accurate predictions of a target score of an object could be used to find a design of the object that maximizes that score.
Such a methodology is particularly useful in engineering problems in which evaluating solution candidates comes with big risk. For example, synthesizing a proposed protein requires a series of wet lab experiments and induces extra cost and human effort \citep{gomez2018automatic, brookes2019conditioning}. Thus, to enable proposing \emph{de-novo} generation of strong solution candidates, researchers have drawn their attention to offline \emph{model-based optimization} (MBO). In this paradigm, first, a surrogate model of the score is learned from offline data. Next, a collection of designs is trained to maximize the surrogate, and then proposed as candidates for maximizers of the target score \citep{gomez2018automatic, kumar2021data}.

Unfortunately, MBO introduces unique challenges not encountered in classical prediction tasks.
The most significant issue arises from the incomplete coverage of the design space by the data distribution.
This limitation leads to a phenomenon known as \emph{distribution shift}, where optimized designs drift away from the original data distribution. Consequently, this results in poor proposals with significantly overestimated scores \citep{trabucco2022design, geng2023offline}. To address it, popular MBO algorithms have been employing techniques from offline reinforcement learning \citep{kumar2020conservative, trabucco2021conservative} and generative modeling \citep{kumar2020model, mashkaria2023generative} to enforce the in-distribution constraint. Meanwhile, much of the recent success of DL has been driven by domain-specific neural networks that, when scaled together with the amount of data, lead to increasingly better performance. While researchers have managed to establish such models in several fields, it is not immediately clear how to do it in MBO. Recent theoretical work, however, has shown that MBO methods can benefit from information about the target function's \emph{structure} expressed by its \textit{functional graphical model} \citep[FGM]{grudzien2024functional}. This insight opens up new possibilities for developing more effective MBO models by injecting such structure into their architecture. However, how to integrate such decompositions into scalable neural networks remains an open question, and addressing this challenge is the focus of this work.

In contrast to previous works, in our paper, we develop a scalable model that tackles MBO by learning the structure of the black-box function through the formalism of functional graphical models. Our architecture aims to solve MBO by \emph{1)} decomposing the predictions over the \emph{cliques} of the function's FGM, and \emph{2)} enforcing the cliques' marginals to have wide coverage with our novel form of the variational bottleneck \citep{kingma2013auto, alemi2016deep}. However, building upon our Theorem \ref{lemma:rotation}, we do not follow \cite{grudzien2024functional} during the FGM discovery step, and instead subsume it in the learning algorithm. To enable scaling to high-dimensional problems and large datasets, we employ the transformer backbone \citep{vaswani2017attention}. Empirically, we show that our model, \emph{Cliqueformer}, inherits the scalability guarantees of MBO with FGM. We further demonstrate its superiority to baselines in a suite of tasks with latent radial-basis functions \citep{grudzien2024functional} and real-world chemical \citep{hamidieh2018data} and DNA design tasks \citep{trabucco2022design, uehara2024understanding}.

\section{Preliminaries}
In this section, we provide the necessary background on offline model-based optimization. Additionally, we cover the basics of functional graphical models on top of which we build Cliqueformer.

\subsection{Offline Model-based Optimization}
We consider an offline model-based optimization (MBO for short) problem, where we are given a dataset $\mathcal{D}=\{\rvx^{i}, \ry^{i}\}_{i=1}^{N}$ of examples $\rvx$ from a \emph{design space} $\mathcal{X}$, following distribution $p(\rvx)$, and their values $\ry=f(\rvx) \in \mathbb{R}$ under an unkown (black-box) function $f(\rvx)$. 
Our goal is to optimize this function \textbf{offline}. That is, to find its maximizer
\begin{align}
    \label{eq:bbo}
    \rvx^{\star} = \argmax_{\rvx\in\mathcal{X}} f(\rvx)
\end{align}
by only using information provided in $\mathcal{D}$ \citep{kumar2020model}. 
Sometimes, a more general objective in terms of a \emph{policy} over $\pi(\rvx)$ is also used, $\eta(\rvx) = \E_{\rvx\sim\pi}[f(\rvx)]$.
In either form, unlike in Bayesian optimization, we cannot make additional queries to the black-box function \citep{brochu2010tutorial, kumar2020model}. 
This formulation represents settings in which obtaining such queries is prohibitively costly, such as tests of new chemical molecules or of new hardware architectures \citep{kim2016pubchem, kumar2021data, yang2024generative}.

To solve MBO, it is typical to learn a model $f_{\theta}(\rvx)$ of $f(\rvx)$ parameterized by a vector $\theta$ with a regression method and data from $\mathcal{D}$,
\begin{align}
    \label{eq:model}
    L(\theta) = \E_{(\rvx, \ry)\sim\mathcal{D}}\big[ \big(f_{\theta}(\rvx) - \ry \big)^2\big] + \text{\normalfont Reg}(\theta) 
\end{align}
where $\text{\normalfont Reg}(\theta)$ is an optional regularizer. 
Classical methods choose $\text{\normalfont Reg}(\theta)$ to be identically zero, while conservative methos use the regularizer to bring the values of examples out of $\mathcal{D}$ down. 
For example, the regularizer of Conservative Objective Model's \citep[COMs]{trabucco2021conservative} is 
\begin{align}
    \text{Reg}_{\text{com}}(\theta) = \alpha \big(\E_{\rvx \sim \mu_{\theta_{\perp}}}[f_{\theta}(\rvx)] - \E_{\rvx \sim \mathcal{D}}[f_{\theta}(\rvx)]\big), \quad \alpha > 0,\nonumber
\end{align}
where $(\cdot)_{\perp}$ is the stop-gradient operator
and $\mu_{\theta_{\perp}}(\rvx)$ is the distribution obtained with a few gradient ascent steps on $\rvx$ initialized from $\mathcal{D}$. This distribution depends on the value of $\theta$ but is not differentiated through while computing the loss's gradient, and thus we denote it by $\theta_{\perp}$.
\begin{figure}[t]
    \centering
    \hspace{-2em}
    \begin{minipage}[b]{0.25\textwidth}
        \centering
        \includegraphics[width=\linewidth]{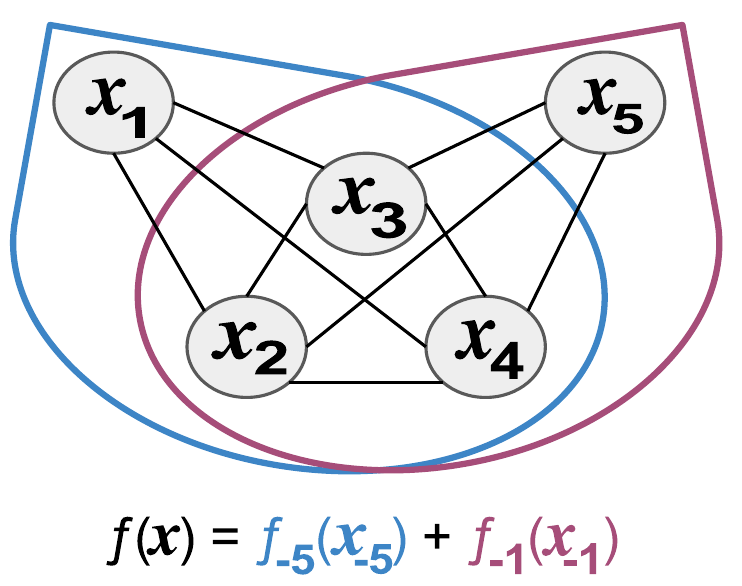}
    \end{minipage}
    \begin{minipage}[b]{0.25\textwidth}
        \centering
        \includegraphics[width=\linewidth]{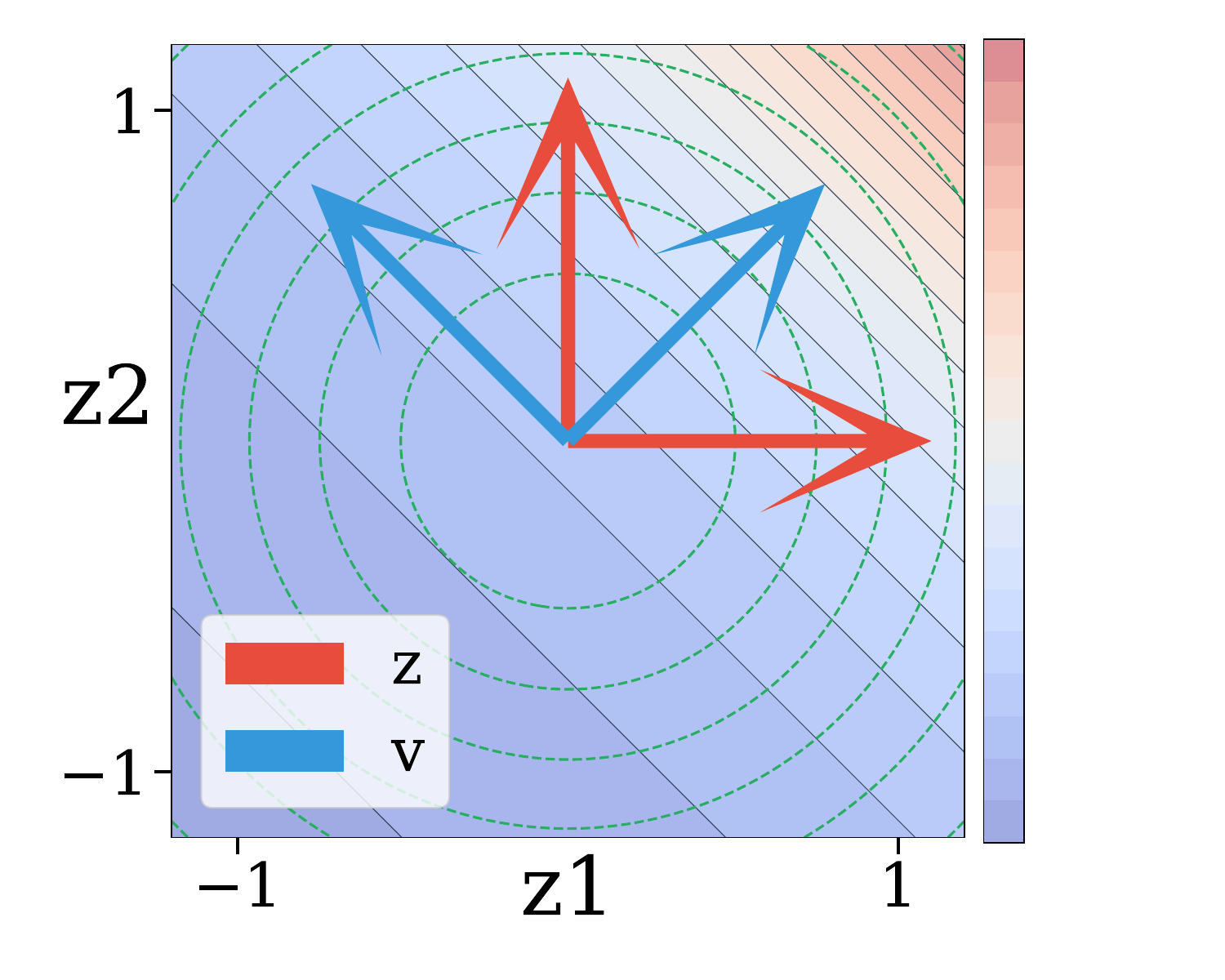}
    \end{minipage}
    \hspace{-4em}
    \caption{\emph{Left.} An FGM of a 5D function which decomposes as $f(\rvx) = f_{-5}(\rvx_{-5}) + f_{-1}(\rvx_{-1})$. By Definition~\ref{def:fgm}, nodes $\rx_1$ and $\rx_5$ are not linked. \emph{Right.} Illustration of construction in the proof of Theorem~\ref{lemma:rotation} in 2D. Red axes ($\rvz$) show contour lines of $f$ depending on both coordinates $\rz_1$ and $\rz_2$. Blue axes ($\rvv$) show the same contours depending only on $\rv_1$ after rotation. The Gaussian distribution (green circles) maintains circular shape in both coordinate systems, demonstrating invariance under rotation.}
    \label{fig:fgm-rotation}
\end{figure}

Unfortunately, in addition to the extra computational cost that the inner-loop gradient ascent induces, COMs's regularizer limits the amount of improvement that it allows its designs to make.
This is particularly frustrating since recent work on functional graphical models \citep[FGM]{grudzien2024functional} delivered a premise of large improvements in the case when the black-box function's graph can be discovered, as we explain in the next subsection.

\subsection{Functional Graphical Models}

An FGM of a high-dimensional function $f(\rvx)$ is a graph over components of $\rvx$ that separates $\rx_i, \rx_j \in \rvx$ if their contributions to $f(\rvx)$ are independent of each other. 
Knowing such a structure of $f$ one can eliminate interactions between independent variables from a model that approximates it, and thus prevent an MBO algorithm from adversarially exploiting them.
We summarize basic properties of FGMs below. 
In what follows, we denote $\mathcal{X}_{-i}$ as the design (input) space without the $i^{th}$ subspace, and $\rvx_{-i}$ as its element \footnote{For example, if $\mathcal{X}=\mathcal{X}_{1} \times \mathcal{X}_{2} \times \mathcal{X}_{3}$ and $\rvx=(\rx_1, \rx_2, \rx_3)$, then $\mathcal{X}_{-2} = \mathcal{X}_{1} \times \mathcal{X}_{3}$, and $\rvx_{-2}=(\rx_1, \rx_3)$.}.
\begin{restatable}{definition}{fgm}
    \label{def:fgm}
    Let $\rvx = (\rx_v \ | \ v \in \mathcal{V})$ be a joint variable with index set $\mathcal{V}$, and $f(\rvx)$ be a real-valued function.
    An FGM $\mathcal{G}=(\mathcal{V}, \mathcal{E})$ of $f(\rvx)$ is a graph where the edge set $\mathcal{E} \subset \mathcal{V}^2$ is such that,
\begin{center}
    $\exists \   f_{-i}:\mathcal{X}_{-i} \rightarrow  \mathbb{R} \  \text{and} \  \
    f_{-j}:\mathcal{X}_{-j} \rightarrow \mathbb{R}, 
    \  \text{with}  \ 
    f(\rvx) = f_{-i}(\rvx_{-i}) + f_{-j}(\rvx_{-j}),
    \text{implies} \ (i, j) \notin \mathcal{E}.$
\end{center}
See Figure \ref{fig:fgm-rotation} for illustration.
\end{restatable}    


Crucially, FGMs enable decomposing the target function into sub-functions with smaller inputs defined by the set of maximal cliques $\mathcal{C}$ of the FGM's graph,
\begin{align}
\label{eq:decomp}
f(\rvx) = \sum_{C\in\mathcal{C}}f_{C}(\rvx_C).
\end{align}

Intuitively, the decomposition enables more efficient learning of the target function since it can be constructed by adding together functions defined on smaller inputs, which are easier to learn.
This, in turn, allows for more efficient MBO since the joint solution $\rvx^{\star}$ can be recovered by \emph{stitching} individual solutions $\rvx^{\star}_{C}$ to smaller problems.
This intuition is formalized by the following theorem.

\begin{restatable}[\cite{grudzien2024functional}]{theorem}{regret}
    \label{th:regret}
    For a real-valued function $f(\rvx)$ with FGM with maximal cliques $\mathcal{C}$, and policy class $\Pi$, the regret of MBO with FGM information satisfies
    \begin{center}
        $\eta(\pi^{\star}) - \eta(\hat{\pi}_{\text{FGM}}) \leq \text{C}_{\text{stat}} \text{C}_{\text{cpx}} \max\limits_{\pi\in\Pi, \rvx \in \mathcal{X}, C\in\mathcal{C}} \frac{\pi(\rvx_{C})}{p_{C}(\rvx_C)}$
    \end{center}
    where $\text{C}_{\text{stat}}$, $\text{C}_{\text{cpx}}$ are distribution and complexity constants defined in Appendix B.
\end{restatable}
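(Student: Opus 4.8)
The plan is to leverage the additive clique decomposition from \eqref{eq:decomp} to reduce the global regret to a sum of per-clique estimation errors, and then to control each of these with a change-of-measure argument that surfaces the density ratio appearing in the bound. First I would rewrite the policy value using the decomposition: since $f(\rvx)=\sum_{C\in\mathcal{C}}f_C(\rvx_C)$, linearity of expectation gives $\eta(\pi)=\E_{\rvx\sim\pi}[f(\rvx)]=\sum_{C\in\mathcal{C}}\E_{\rvx_C\sim\pi_C}[f_C(\rvx_C)]$, where $\pi_C$ denotes the marginal of $\pi$ on clique $C$. The surrogate learned from $\mathcal{D}$ inherits the same structure, $\hat{f}=\sum_{C\in\mathcal{C}}\hat{f}_C$, so its induced value is $\hat{\eta}(\pi)=\sum_{C\in\mathcal{C}}\E_{\pi_C}[\hat{f}_C]$, and $\hat{\pi}_{\text{FGM}}=\argmax_{\pi\in\Pi}\hat{\eta}(\pi)$.

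Next I would apply the standard optimization-under-estimation decomposition. Writing the regret as
\begin{align}
\eta(\pi^{\star})-\eta(\hat{\pi}_{\text{FGM}}) &= \big(\eta(\pi^{\star})-\hat{\eta}(\pi^{\star})\big) + \big(\hat{\eta}(\pi^{\star})-\hat{\eta}(\hat{\pi}_{\text{FGM}})\big) \nonumber \\ &\quad + \big(\hat{\eta}(\hat{\pi}_{\text{FGM}})-\eta(\hat{\pi}_{\text{FGM}})\big), \nonumber
\end{align}
the middle term is nonpositive because $\hat{\pi}_{\text{FGM}}$ maximizes $\hat{\eta}$, so the regret is at most $2\sup_{\pi\in\Pi}|\eta(\pi)-\hat{\eta}(\pi)|$. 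The point of the FGM structure is now that this uniform error splits across cliques, $|\eta(\pi)-\hat{\eta}(\pi)|\leq\sum_{C\in\mathcal{C}}|\E_{\pi_C}[f_C-\hat{f}_C]|$, so the whole argument is reduced to bounding each clique's contribution.

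For a fixed clique I would switch the expectation from the (unknown) evaluation measure $\pi_C$ to the (known) data marginal $p_C$ via importance weighting: $\E_{\pi_C}[f_C-\hat{f}_C]=\E_{p_C}\big[\tfrac{\pi_C}{p_C}(f_C-\hat{f}_C)\big]$, which is bounded in absolute value by $\max_{\rvx_C}\tfrac{\pi_C(\rvx_C)}{p_C(\rvx_C)}\cdot\E_{p_C}|f_C-\hat{f}_C|$. The density ratio, maximized over $\pi\in\Pi$, $\rvx$, and $C$, is exactly the factor displayed in the theorem; the in-distribution regression error $\E_{p_C}|f_C-\hat{f}_C|$ is a statistical quantity governed by the sample size and the capacity of the clique-local function class, and its supremum over cliques together with a uniform-convergence argument over $\Pi$ is what I would absorb into $\text{C}_{\text{stat}}$, while the number and size of cliques (and the factor $2$) go into $\text{C}_{\text{cpx}}$.

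The step I expect to be the main obstacle is controlling $\E_{p_C}|f_C-\hat{f}_C|$ cleanly. The clique decomposition in \eqref{eq:decomp} is not unique—constants and functions supported on clique overlaps can be shuffled between terms—so the individual $f_C$ are not identifiable, and regression only pins down the aggregate $\sum_C f_C$. The resolution is to argue that what enters the bound is a gauge-invariant combination: the per-clique errors must be measured against a fixed reference decomposition, and one shows the aggregate in-distribution guarantee transfers to the per-clique terms up to the overlap structure. Handling this identifiability issue, and making $\text{C}_{\text{stat}}$ depend only on the \emph{maximal clique size} rather than the ambient dimension (which is the entire point of exploiting the FGM), is where the real work lies.
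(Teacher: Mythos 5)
This theorem is imported from \citet{grudzien2024functional}; the paper does not prove it. Appendix \ref{ap:the} only restates it and supplies the definitions of the two constants: $\text{C}_{\text{cpx}} = \sqrt{N_{\text{clique}}\sum_{i}\log(|\mathcal{F}_{C_i}|/\delta)/N}$ is a PAC-style capacity term over the per-clique function classes, and $\text{C}_{\text{stat}} = \sqrt{1/(1-\sigma)}$, where $\sigma$ is the maximal correlation under the data distribution between estimators $\hat{f}_{C_i}, \hat{f}_{C_j}$ attached to distinct cliques. So there is no in-paper proof to match your argument against step by step.

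That said, your sketch reproduces the standard architecture of such bounds --- the additive decomposition of $\eta(\pi)$ over cliques, the three-term regret split with a nonpositive optimization term, and a change of measure that surfaces the concentrability factor $\max \pi(\rvx_C)/p_C(\rvx_C)$ --- and you correctly locate the genuine difficulty: regression only controls the aggregate $\sum_{C}(\hat{f}_C - f_C)$ in-distribution, not the individual summands. But what you defer as ``where the real work lies'' is precisely what the constant $\text{C}_{\text{stat}}$ is built to supply: the requirement $\sigma < 1$ is a restricted-eigenvalue/incoherence condition on the clique function classes under $p$, and $\sqrt{1/(1-\sigma)}$ is the price of converting the aggregate $L^2(p)$ guarantee into per-clique guarantees. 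Without that step (or a substitute for it), your argument does not close, because the quantities $\E_{p_C}\big[|f_C - \hat{f}_C|\big]$ that you importance-weight are not individually controlled by the training loss. A secondary, cosmetic point: your allocation of quantities to the two constants is reversed relative to the definitions in Appendix \ref{ap:the} --- the sample-size and capacity dependence lives in $\text{C}_{\text{cpx}}$, while $\text{C}_{\text{stat}}$ carries only the inter-clique correlation.
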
   

This theorem  implies that a function approximator equipped with FGM does not require the dataset to cover the entire design space. Instead, it suffices for the dataset to span individual cliques of the space, which is a substantially less stringent condition—particularly when the cliques are small. To illustrate this result numerically, we conduct an experiment, shown in the left column of Figure \ref{fig:lil-exp}. Specifically, we generate high-dimensional Gaussian data $\rvx$ and evaluate $f(\rvx)$, which is modeled as a mixture of random radial basis functions (RBFs) constrained by an FGM of triangles. We then train two models, $f_{\theta}(\rvx)$, each with the same parameter count: one adhering to the FGM decomposition and the other not. These models are used both to approximate the target function and to optimize $\rvx$. The results demonstrate that the FGM-equipped model significantly outperforms its counterpart, both in terms of the quality of fit and, more critically, in the quality of the optimized designs.

\begin{figure}[t]
  \centering

  \begin{minipage}[t]{0.235\textwidth}
    \centering
    \includegraphics[width=\linewidth]{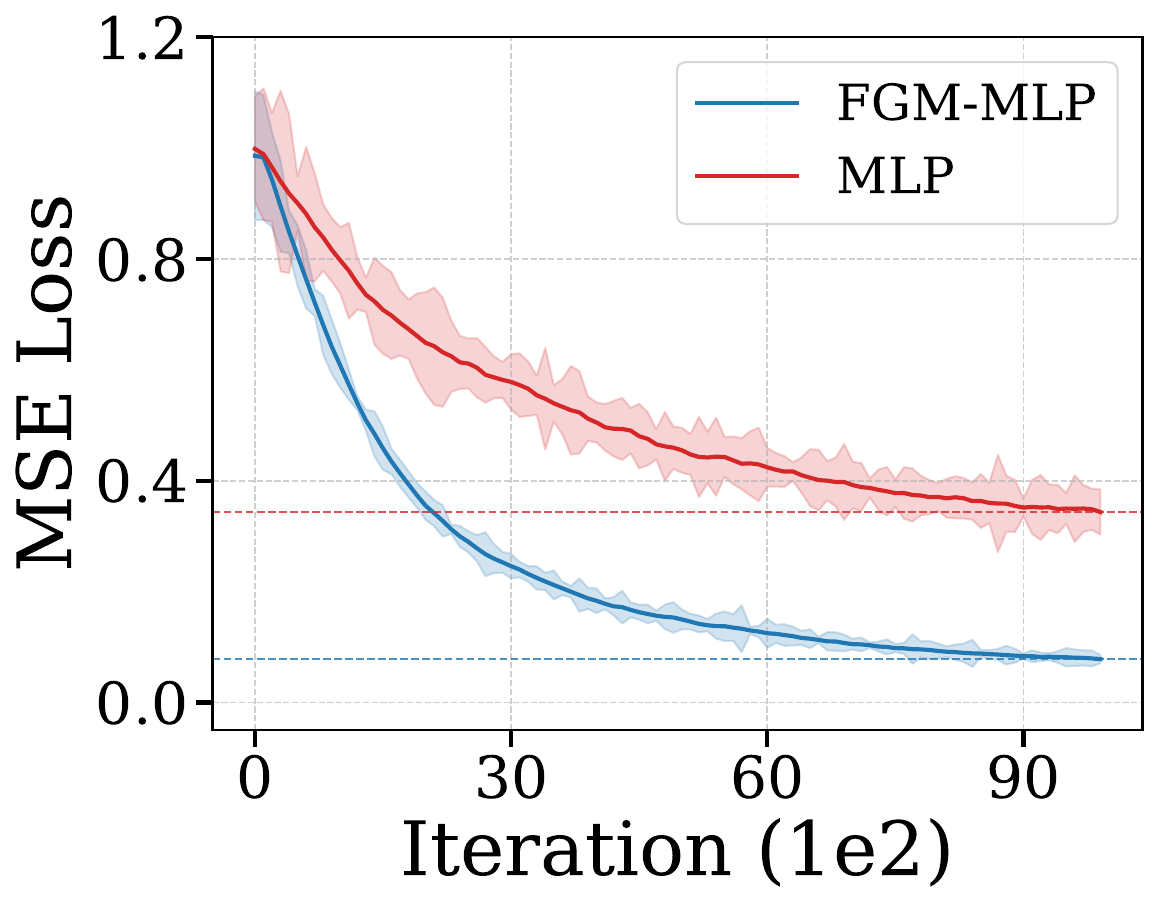}\\[-2pt]
    (a) MSE loss (FGM)
  \end{minipage}\hfill
  \begin{minipage}[t]{0.235\textwidth}
    \centering
    \includegraphics[width=\linewidth]{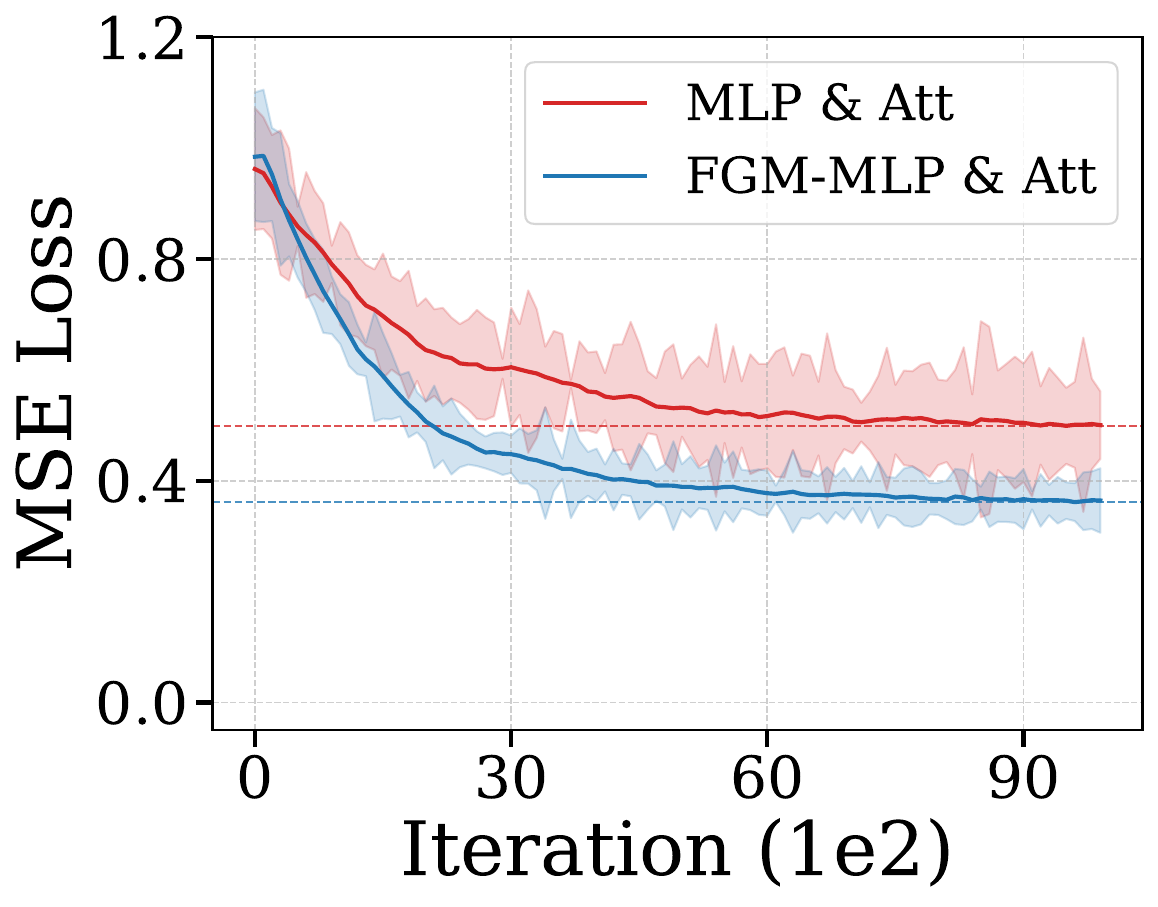}\\[-2pt]
    (b) MSE loss (no FGM)
  \end{minipage}

  \begin{minipage}[t]{0.235\textwidth}
    \centering
    \includegraphics[width=\linewidth]{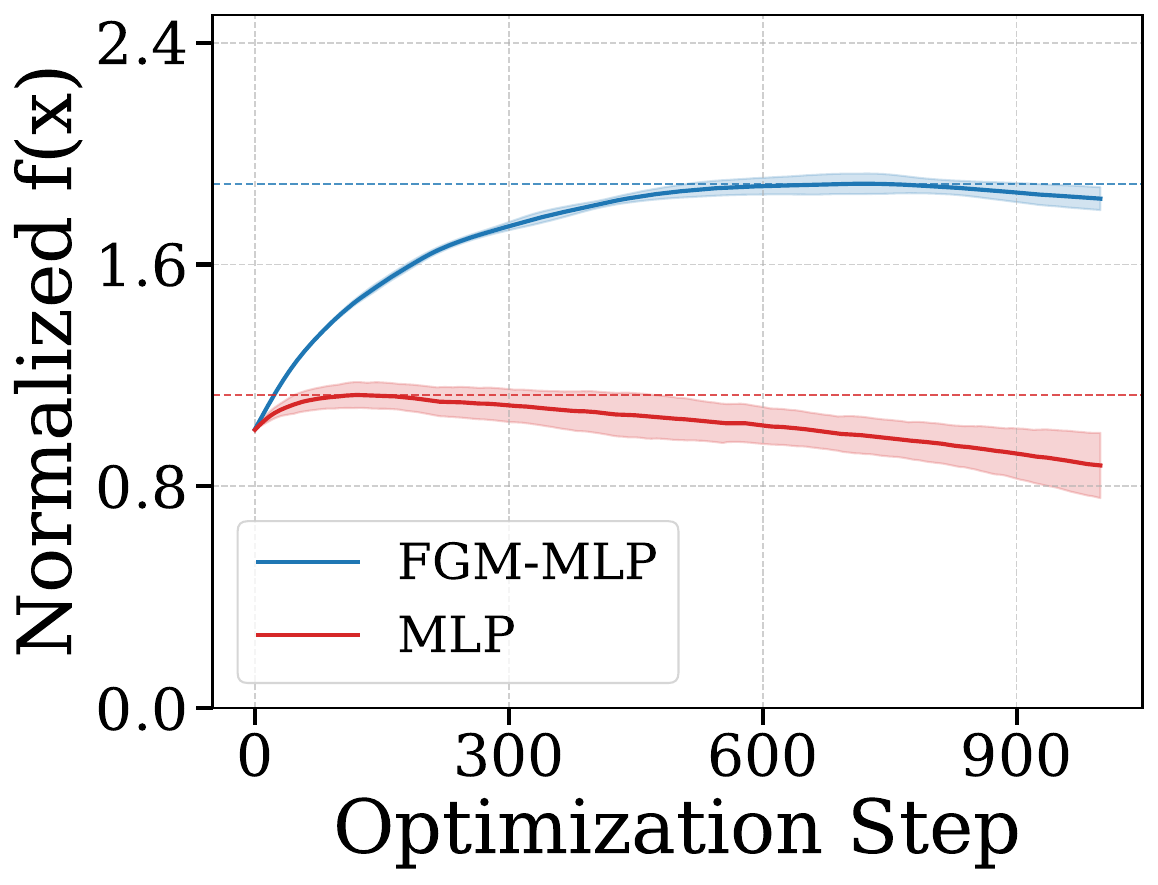}\\[-2pt]
    (c) f(x) value (FGM)
  \end{minipage}\hfill
  \begin{minipage}[t]{0.235\textwidth}
    \centering
    \includegraphics[width=\linewidth]{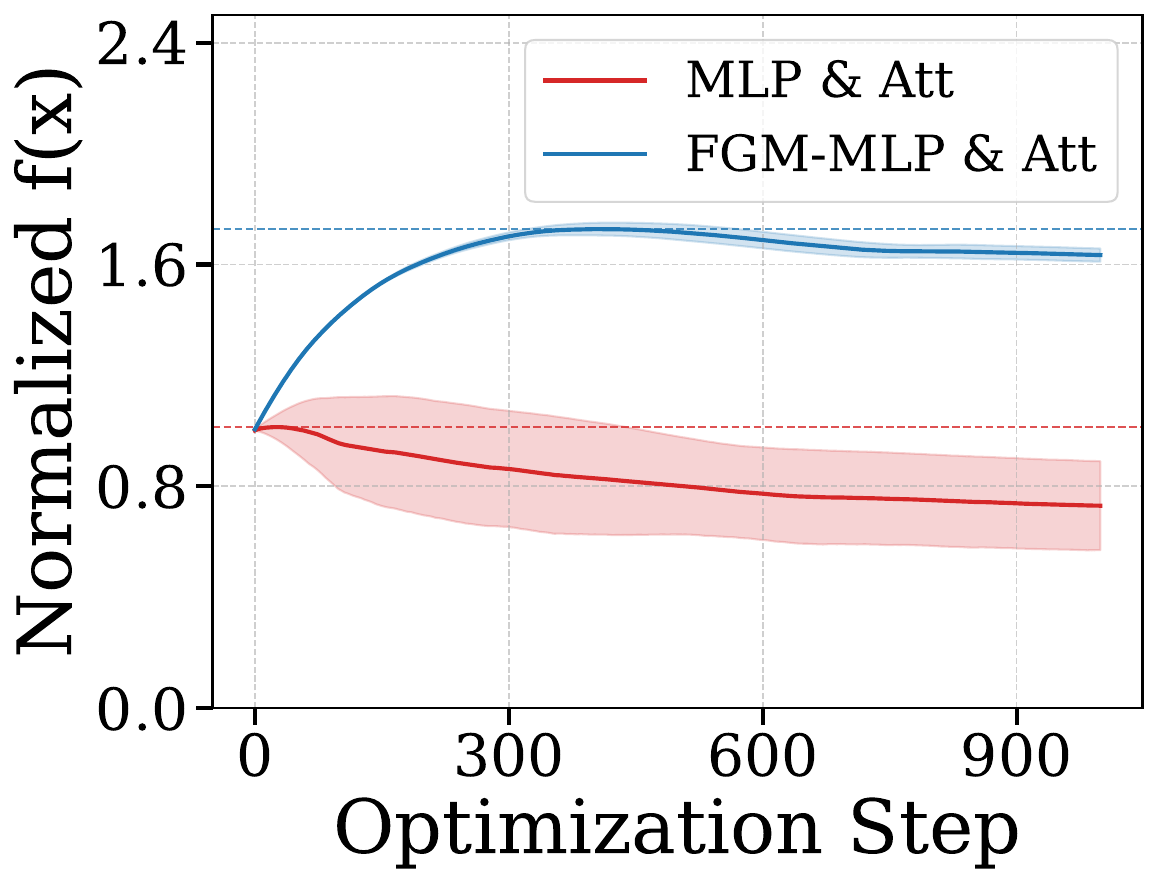}\\[-2pt]
    (d) f(x) value (no FGM)
  \end{minipage}

  \caption{{\emph{Left column. FGM known. } The model without the FGM information (red) fits the data poorly (Fig. (a)) and leads to poor designs (Fig. (b)). The model with FGM information (blue) achieves a good fit and leads to designs largely superior to the data. \emph{Right column. FGM unknown.} The model with \emph{an} FGM decomposition (blue) achieves a slightly better fit than an oblivious model (red) but leads to significantly better designs.
    }}
  \label{fig:lil-exp}
\end{figure}

In the next section, we show how these results can be combined with a transformer, mitigate the distribution shift problem, and enable efficient MBO.

\section{Cliqueformer}
\label{sec:cliqueformer}

This section introduces a neural network model to solve MBO problems through standard end-to-end training on offline datasets. We present a new theoretical result, outline the key desiderata for such a model, and propose an architecture—\emph{Cliqueformer}—that addresses these requirements.

\subsection{Structure Discovery}
\label{subsec:non}

The regret bound from Theorem \ref{th:regret} applies to methods that use the target function's FGM in their function approximation.
It implies that such methods can solve even very high-dimensional problems if their underlying FGMs have low-dimensional cliques or, simply speaking, are \emph{sparse}.
Since, in general, no assumptions about the input can be made, this motivates learning a representation of the input for which one can make distributional assumptions and infer the FGM with statistical tests.
Following this reasoning, \citet{grudzien2024functional} offer a heuristic technique for discovering an FGM over learned, latent, normally-distributed variables.
However, as we formalize with the following theorem, which we prove in Appendix B, even such attempts are futile in dealing with black-box functions.

\begin{restatable}{theorem}{rotation}
    \label{lemma:rotation}
    For any $d \geq 2$ and random variable $\rvx\in\mathbb{R}^{d}$ with positive probability density, there exists a function $f(\rvx)$ and two standard-normal reparameterizations $\rvz = z(\rvx)$, $\rvv = v(\rvx)$ such that the FGM of $f$ is complete with respect to $\rvz$ but empty with respect to $\rvv$.
\end{restatable}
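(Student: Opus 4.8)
The plan is to exploit the fact that an FGM is a property of $f$ \emph{relative to a choice of coordinates}, combined with the rotational invariance of the standard normal. First I would Gaussianize $\rvx$ into a reference variable $\rvv$, and then obtain the second reparameterization simply by rotating, $\rvz = R\,\rvv$ with $R$ orthogonal, so that $\rvz$ is automatically standard normal too. The function $f$ is then engineered to be additively separable in the $\rvv$-coordinates but maximally coupled after the rotation, which forces the two FGMs to be empty and complete respectively.

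Concretely, since $\rvx$ has an everywhere-positive density, the conditional-CDF (Knothe--Rosenblatt) transform maps it bijectively (a.e.) onto a uniform vector on $[0,1]^d$, and composing coordinatewise with $\Phi^{-1}$ yields a map $v(\cdot)$ with $\rvv = v(\rvx) \sim \mathcal{N}(0,\mathbf{I}_d)$. For any orthogonal $R$ we then have $\rvz = z(\rvx) := R\,v(\rvx) \sim \mathcal{N}(0,\mathbf{I}_d)$ as well. I define $f$ through its $\rvv$-representation, $f(\rvx) = \sum_{i=1}^{d} g(v_i(\rvx))$ for a fixed univariate $g$. In the $\rvv$-coordinates $f$ is a sum of univariate terms, so for every pair $(k,l)$ the split $f = \big(\sum_{i\neq k} g(v_i)\big) + g(v_k)$ realizes the decomposition demanded by Definition \ref{def:fgm} (the first summand ignores $v_k$, the second ignores $v_l$). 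Hence the FGM is empty with respect to $\rvv$.

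The substance of the argument is the completeness claim. Writing $\rvv = R^{\top}\rvz$, the representation of $f$ in the $\rvz$-coordinates is the \emph{smooth} function $\tilde f(\rvz) = \sum_{i} g\big(\langle R_{\cdot i}, \rvz\rangle\big)$, so I can characterize edges analytically: for $C^2$ functions, a pair $(k,l)$ admits a separating decomposition exactly when $\partial^2 \tilde f/\partial z_k \partial z_l \equiv 0$, and otherwise the edge $(k,l)$ is present (the forward direction is immediate, the converse follows by integrating the mixed partial). Note that the Gaussianization map itself need not be smooth — only smoothness of $\tilde f$ in $\rvz$ is used, and that holds by construction. Taking $g = \exp$ gives $\partial^2 \tilde f/\partial z_k \partial z_l = \sum_i e^{\langle R_{\cdot i},\rvz\rangle} R_{ki}R_{li}$; since the columns of $R$ are distinct, the exponentials $\{\rvz \mapsto e^{\langle R_{\cdot i},\rvz\rangle}\}_{i=1}^d$ are linearly independent, so this mixed partial vanishes identically iff $R_{ki}R_{li}=0$ for every $i$.

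It then remains only to choose $R$ so that for every pair $(k,l)$ some column has both entries nonzero; any $R$ with all entries nonzero suffices, and such matrices exist for every $d \geq 2$ (each constraint $R_{ij}=0$ cuts out a positive-codimension subset of $O(d)$, so a generic rotation works; e.g. the $\pi/4$ rotation for $d=2$). For such $R$ every edge is present, so the FGM is complete with respect to $\rvz$, finishing the construction. I expect this last step to be the main obstacle: one must show the rotation destroys separability \emph{simultaneously for all pairs}, which is exactly what linear independence of exponentials plus a full-support $R$ delivers. A naive choice such as $g(t)=t^2$ would fail, since $\sum_i v_i^2 = \lVert \rvz\rVert^2$ remains separable in every orthonormal frame, underscoring that the non-quadratic, rotation-sensitive structure of $g$ is essential.
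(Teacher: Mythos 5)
Your proposal is correct and follows essentially the same route as the paper's proof: Gaussianize $\rvx$, relate two standard-normal frames by an orthogonal rotation, and use an exponential-based function whose additive separability holds in one frame and fails in the other. The only differences are cosmetic --- the paper takes the single function $\exp\big(l^{-1/2}\sum_i \rz_i\big)$, which is complete in the $\rvz$-frame and collapses to $\exp(\rv_1)$ (hence trivially empty) after rotating one axis onto the diagonal, whereas you start from the separable $\sum_i \exp(\rv_i)$ and establish completeness after a generic all-entries-nonzero rotation via linear independence of exponentials; both arguments rest on the same mixed-partial characterization of FGM edges.
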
    

The theorem (see Figure \ref{fig:fgm-rotation} for intuition behind the proof) implies that FGM is not a fixed attribute of a function that can be estimated from the data, but instead should be viewed as a property of the input's reparameterization.
Furthermore, different reparameterizations feature different FGMs with varied levels of decomposability, some of which may not significantly simplify the target function.
This motivates a reverse approach that begins from defining a desired FGM and learning representations of the input that align with the graph.
To examine if FGM decomposition implemented this way can still bring benefits in MBO, we conduct another experiment in the right column of Figure \ref{fig:lil-exp}. 
This time, we conceal the FGM information from both models, while implementing one of them with another decomposition, in a latent space that is computed with an attention layer (for fairness, we gave the FGM-oblivious model an attention layer too). 
Despite not bringing major improvements to the quality of fit, the decomposable model does lead to better designs, providing empirical support for the considered approach. 
Consequently, in the next subsection, we introduce Cliqueformer, where the FGM is specified as a hyperparameter of the model and a representation of the data that follows its structure is learned.

\subsection{Architecture}
The goal of this subsection is to derive an MBO model
that can simultaneously learn the target function as well as its structure, and thus be readily applied to MBO.
First, we would like the model to decompose its prediction into a sum of models defined over small subsets of the input variables in the manner of Equation (\ref{eq:decomp}).
As discussed in the previous subsection, in general settings, while such a structure is not known a priori, a sufficiently expressive model can learn to follow a pre-defined structure.
Thus, instead, we propose that the FGM be defined first, and a representation of the data be learned to align with the chosen structure.

    \begin{restatable}{desideratum}{desOne}\label{des1}
The model should follow a pre-defined FGM decomposition in a learned space.
\end{restatable}

Accomplishing this desideratum is simple.
A model that we train should, after some transformations, split the input's representations, denoted as $\rvz$, into partially-overlapping cliques, $(\rvz_{C_1}, \dots , \rvz_{C_{N_{\text{clique}}}})$,
and process them independently, followed by a summation.
To implement this, we specify how many cliques we want to decompose the function into, $N_{\text{clique}}$, as well as their dimensionality, $d_{\text{clique}}$, and the size of their \emph{knots}---dimensions at which two consecutive cliques overlap, $d_{\text{knot}}=|\rvz_{C_i} \cap \rvz_{C_{i+1}}|$.
Then, we pass each clique through an MLP network that is also equipped with a trigonometric clique embedding, similar to \citet{vaswani2017attention} to express a different function for each clique,
\begin{center}
    $f_{\theta}(\rvz) = \frac{1}{N_{\text{clique}}} \sum_{i=1}^{N_{\text{clique}}} f_{\theta}(\rvz_{C_i}, \vc_i)$, \\ 
    \text{where} \ 
    $\vc_{i, 2j}, \vc_{i, 2j+1} = \sin(i \cdot \omega_j), \cos(i \cdot \omega_j)$
\end{center}
and $\omega_j = 10^{-8j/d_{\text{model}}}$.
Here, we use the arithmetic mean over cliques rather than summation because, while being functionally equivalent, it provides more stability when $N_{\text{clique}} \rightarrow \infty$.
Pre-defining the cliques over the representations allows us to avoid the problem of discovering arbitrarily dense graphs, as explained in Subsection \ref{subsec:non}.
This architectural choice implies that the regret from Theorem \ref{th:regret} will depend on the coverage of such representations' cliques, $\max_{i\in [N_{\text{clique}}]}1/e_{\theta}(\rvz_{C_i})$, where $e_{\theta}(\rvz) = \mathbb{E}_{\rvx\sim \mathcal{D}}[e_{\theta}(\rvz|\rvx)]$ is the marginal distribution of the representations learned by an encoder $e_{\theta}$.
This term can become dangerously large if individual distributions $e_{\theta}(\rvz_{C})$ 
put disproportionally more density to some regions of the latent space than to others.
Thus, to prevent that, we propose to train the latent space so that the distribution of cliques attain broad coverage\footnote{For a probability distribution of z to attain a \emph{broad coverage} means to distribute its
probability mass on a large set of values of z, as opposed to be tightly concentrated around its modes.}.
    \begin{restatable}{desideratum}{des2}
    Individual cliques of the learned representations should attain broad coverage.
\end{restatable}

To meet this requirement, we leverage tools from representation learning, but in a novel way.
Namely, we put a variational bottleneck \citep[VIB]{kingma2013auto, higgins2016beta, alemi2016deep} on \textbf{individual} cliques of our representations that brings their distribution closer to a prior with wide coverage, which we choose to be the standard-normal prior.
To implement it, when computing the loss for a single example, we sample a single clique to compute the VIB for at random, as opposed to computing it for the joint latent variable like in the classical VIB,
\begin{align}
    \text{VIB}(\rvx, \theta) = \E_{i \sim U[N_{\text{clique}}]}\big[ \text{KL}\big(e_{\theta}(\rvz_{C_i}|\rvx), p_{C_i}(\rvz_{C_i}) \big) \big], \nonumber
\end{align}
where $e_{\theta}(\rvz_{C}|\rvx)$ is the density of clique $C$ produced by our learnable encoder $e_{\theta}(\rvz|\rvx)$, and $p_{C}(\rvz_C)$ is the density of $\rvz_C$ under the standard-normal distrituion.
Note that it is not equivalent to the classical, down-weighted VIB in expectation either since our cliques overlap, meaning that knots contribute to the VIB more often than regular dimensions.
We ablate for the impact of this term in Appendix C.

Lastly, since in practical problems, such as biological and chemical design, the data is often discrete \citep{uehara2024understanding, yang2024generative}, the designs cannot always be readily optimized with gradient-based methods.
Thus, to be generally applicable, our model should be equipped with a data type-agnostic, approximately invertible map from the data space to a continuous space, where gradient-based optimization is feasible.
This brings us to our last desideratum.
    \begin{restatable}{desideratum}{des3}
    The model should have a decoder that maps representations back to the original data space.
\end{restatable}

Thus, together with the model $f_{\theta}(\rvz)$ and the encoder $e_{\theta}(\rvz|\rvx)$, we train a decoder $d_{\theta}(\rvx|\rvz)$ that reconstructs the designs from the latent variables.
Putting it all together, the training objective of our model is a VIB-style likelihood objective with a regression term,
\begin{align}
    \label{eq:loss}
    L_{\text{clique}}(\theta) &= \E_{(\rvx, \ry) \sim \mathcal{D}, \rvz \sim e_{\theta}(\cdot|\rvx)}\Big[ 
    \text{VIB}(\rvx, \theta) \nonumber\\
    &- \log d_{\theta}(\rvx|\rvz) 
    + \tau \cdot \big(\ry - f_{\theta}(\rvz)\big)^2 \Big], 
\end{align}
where $\tau$ is a positive coefficient that we set to $10$.

\begin{figure*}[t]
\begin{center}
    \includegraphics[width=6in]{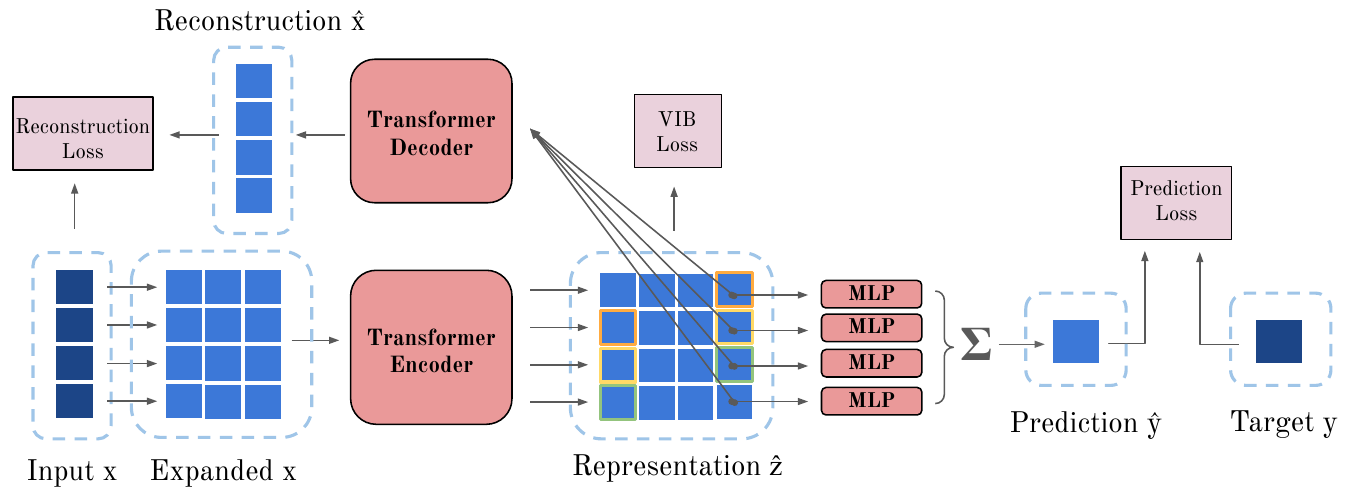}
\caption{
{Illustration of information flow in Cliqueformer's training.
Data are shown in navy, learnable variables in blue, neural modules in red, and loss functions in pink.
The variables in input $\rvx$ are expanded into high-dimensional vectors (Expanded $\rvx$) and, treated as a sequence of embeddings, passed to a transformer encoder to compute representation $\rvz$ which is decomposed into cliques.
The representation goes to the parallel MLPs whose outputs, added together, predict target $\ry$.
The representation $\rvz$ is also fed to a transformer decoder that tries to recover the original input $\rvx$.
Additionally, the representation goes through an information bottleneck during training.}  
}
 \label{fig:cliqueformer}
\end{center}
 \end{figure*} 

We use transformer networks \citep{vaswani2017attention} for both encoder $e_{\theta}$ and decoder $d_{\theta}$ to achieve the expressivity needed for FGM decomposition in latent space. The encoder transforms input $\rvx \in \mathbb{R}^{d}$ into $d$ vectors of size $d_{\text{model}}$, processes them as token embeddings, and outputs a normal distribution $e_{\theta}(\rvz|\rvx)$. Sampled representations $\rvz$ are arranged into $N_{\text{clique}}$ cliques (dimension $d_{\text{clique}}$, knot size $d_{\text{knot}}$) before being passed to both the predictive model $f_{\theta}$ and transformer decoder $d_{\theta}$.
For illustration of the information flow in Cliqueformer's training consult Figure \ref{fig:cliqueformer}.

\begin{algorithm}[t!]
\caption{MBO with Cliquefrormer}
\label{algorithm:bbo-cliqueformer}
\begin{algorithmic}[1]
\STATE Initialize the encoder, decoder, and predictive model $(e_{\theta}, d_{\theta}, f_{\theta})$.
\FOR{$t=1, \dots, T_{\text{model}}$}
    \STATE Take a gradient step on the parameter $\theta$ with respect to $L_{\text{clique}}(\theta)$ from Equation (\ref{eq:loss}).
\ENDFOR
\STATE Sample $B$ examples $\rvx^{(i_b)} \sim \mathcal{D}$ from the dataset.
\STATE Encode the examples  $\rvz^{(i_b)} \sim e_{\theta}(\rvz|\rvx^{(i_b)})$,. 
\FOR{$t=1, \dots, T_{\text{design}}$}
    \STATE Decay the representations, $\rvz^{(i_b)} \gets (1-\lambda) \rvz^{(i_b)}$.
    \STATE Take a gradient ascent step on the parameter $\rvz$ with respect to $L_{\text{mbo}}$ from Equation (\ref{eq:mbo-objective}).
\ENDFOR
\STATE Propose solution candidates by decoding the representations, $\rvx^{\star} \sim d_{\theta}(\rvx | \rvz^{\star})$.
\end{algorithmic}
\end{algorithm}

\subsection{Optimizing Designs With Cliqueformer}

Once Cliqueformer is trained, we use it to optimize new designs.
Typically, MBO methods initialize this step at a sample of designs $(\rvx^{i_b})_{b=1}^{B}$ drawn from the dataset \citep{trabucco2021conservative}. 
Since in our algorithm the optimization takes place in the latent space $\mathcal{Z}$, we perform this step by encoding the sample of designs
with Cliqueformer's encoder, $\rvz^{i_b} \sim e_{\theta}(\rvz | \rvx^{i_b})$.
We then optimize the representation $\rvz^{i_b}$ of design $\rvx^{i_b}$ to maximize our model's value, 
\begin{align}
    \label{eq:mbo-objective}
    L_{\text{mbo}}\big( (\rvz^{i_b})_{b=1}^{B} \big) = \frac{1}{B} \sum_{b=1}^{B} f_{\theta}(\rvz^{i_b}),
\end{align}
at the same time minding the denominator of the regret bound from Theorem \ref{th:regret}.
That is, we don't want the optimizer to explore regions under which the marginal densities
$e_{\theta}(\rvz_C) = \E_{\rvx\sim \mathcal{D}}[e_{\theta}(\rvz_{C}|\rvx)]$ are small.
Fortunately, since the encoder was trained with standard-normal prior on the cliques, $p(\rvz_{C}) = N(0_{d_{\text{clique}}}, I_{d_{\text{clique}}})$, we know that values of $\rvz$ closer to the origin have unilaterally higher marginals.
This simple property of standard-normal distribution allows us to confine the optimizer's exploration to designs with in-distribution cliques by exponentially decaying the design at every optimization step.
Thus, we use AdamW as our optimizer \citep{loshchilov2017fixing} (see Appendix C for an ablation study).
We provide the pseudocode of the whole procedure of designign with Cliqueformer in Algorithm \ref{algorithm:bbo-cliqueformer}.

\section{Related Work}
The idea of using machine learning models in optimization problems has existed for a long time, and has been mainly cultivated in the literature on Bayesian optimization \citep[BO]{williams2006gaussian, snoek2012practical}.
The BO paradigm relies on two core assumptions: availability of data of examples paired with their target function values, as well as access to an oracle that allows a learning algorithm to query values of proposed examples.
Thus, similarly to reinforcement learning, the challenge of BO is to balance exploitation and exploration of the black-box function modeled by a Gaussian process.

Recently, to help BO tackle very high-dimensional problems, techniques of decomposing the target function have become more popular \citep{kandasamy2015high, rolland2018high}.
Most commonly, these methods decompose the target function into functions defined on the input's partitions.
While such models are likely to deviate far from the functions' ground-truth structure, one can derive theoretical guarantees for a range of such decompositions under the BO's query budget assumptions \citep{ziomek2023random}.
However, these results do not apply to our setting of offline MBO, where no additional queries are available and the immediate reliability on the model is essential.
Furthermore, instead of partitioning the input, we decompose our prediction over a latent variable that is learned by a transformer, enabling the model to acquire an expressive structure over which the decomposition is valid.

\begin{table*}[!t]
    \centering
    \setlength{\tabcolsep}{6pt}
    \begin{tabular}{l c c c c c c c c}
        \textbf{Task} & \textbf{Grad.Asc.} & \textbf{RWR} & \textbf{IOM} & \textbf{COMs} & \textbf{DDOM} & \textbf{MatchOpt} & \textbf{Transformer} & \textbf{Cliqueformer} \\ 
        \hline
        Lat. RBF 11 & 0.00 & 0.08 & 0.00 & \textbf{0.66} & 0.00 & 0.56 & 0.47 & 0.65 \\ 
        Lat. RBF 31 & 0.00 & 0.31 & 0.00 & 0.50 & 0.00 & \textbf{0.66} & 0.00 & 0.64 \\ 
        Lat. RBF 41 & 0.00 & 0.35 & 0.00 & 0.45 & 0.00 & 0.32 & 0.20 & \textbf{0.66} \\ 
        Lat. RBF 61 & 0.00 & 0.29 & 0.00 & 0.25 & 0.00 & \textbf{0.74} & 0.16 & 0.66 \\ 
        Superconductor & 1.13 & 1.03 & 1.03 & 0.97 & 1.22 & 0.84 & 0.96 & \textbf{1.43} \\ 
        TF-Bind-8 & 0.99 & \textbf{1.58} & 0.99 & 1.57 & 1.55 & 1.47 & 1.48 & \textbf{1.58} \\ 
        DNA HEPG2 & \textbf{2.16} & 1.91 & 1.97 & 1.20 & 1.82 & 1.72 & 2.13 & 2.10 \\ 
        DNA k562 & 2.11 & 1.91 & 2.62 & 1.80 & 2.61 & 2.25 & 2.60 & \textbf{3.15} \\ 
        \hline 
        Ave.score $\uparrow$ & 0.80 & 0.93 & 0.83 & 0.93 & 0.90 & 1.07 & 1.00 & \textbf{1.36} \\ 
        Ave.rank $\downarrow$ & 5.00 & 4.00 & 5.00 & 4.38 & 4.75 & 4.38 & 4.63 & \textbf{1.63} \\ 
    \end{tabular}
    \caption{{ Comparison of Cliqueformer and the baselines. Each score is the empirical estimate of the expected top 1\% of scores, estimated by averaging the top-10 values out of 1000 generated designs, averaged over 5 runs. Scores are min-max normalized using the min and max from the dataset, ensuring that dataset designs fall in $[0, 1]$. Unlike \cite{trabucco2022design}, we do not use test data for normalization, improving interpretability (see Appendix D). We report both average score (higher is better) and average rank (lower is better).}}
    \label{tab:results}
\end{table*}

Offline model-based optimization (MBO) has gained traction in domains where BO assumptions cannot be met, offering solution generation from static datasets without additional queries. 
Early applications in molecule design \citep{gomez2018automatic} utilized variational auto-encoders \citep[VAE]{kingma2013auto} to learn continuous molecular representations. 
While this demonstrated deep learning's potential, it didn't explore the target function's structural properties. 
The Conditioning by Adaptive Sampling (CBaS) algorithm \citep{brookes2019conditioning} introduced a data type-agnostic approach for refining designs using non-differentiable oracle predictions. 
While this can be combined with trainable models through \emph{auto-focusing} \citep{fannjiang2020autofocused}, our setting differs by assuming a differentiable neural network model of the black-box function, enabling optimization through automatic differentiation \citep{paszke2019pytorch}.
\citet{trabucco2021conservative} introduced a neural network-based method, exactly for our setting, dubbed Conservative Objective Models (COMs), where a surrogate model is trained to both predict values of examples that can be found in the dataset, and penalize those that are not.
COMs differs from our work fundamentally, since its contribution lies in the formulation of the conservative regularizer applied to arbitrary neural networks, while we focus on scalable model architectures that facilitate computational design.
It is worth noting, however, that our Algorithm \ref{algorithm:bbo-cliqueformer}, similarly to COMs’s conservative regularizer, does constrain exploration of the design space, but it does so implicitly through weight decay.
A bit more similarly to us, Invariant Objective Model (IOM) of \citet{qi2022data} also trains a model with a latent representation. 
Their focus, however, is to make the representation distribution-invariant via GANs \citep{goodfellow2014generative}, while we focus on our network's decomposition abilities.

Another recent line of work proposes to tackle the design problem through means of generative modeling.
BONET \citep{mashkaria2023generative}, DDOM \citep{pmlr-v202-krishnamoorthy23a} are examples of works that bring the most recent novelties of the field to address design tasks.
BONET does so by training a transformer to generate sequences of designs that monotonically improve in their value,
DDOM by training a value-conditioned diffusion model.
That is, these methods attempt to generate high-value designs through novel conditional generation mechanisms.
Instead, we model MBO as a maximization problem, and propose a scalable model that acquires the structure of the black-box function through standard gradient-based learning.
To this end, we bring powerful techniques from generative modeling, like transformers \citep{vaswani2017attention} and variational-information bottlenecks \citep{kingma2013auto, alemi2016deep}.

The work of \citet{grudzien2024functional} introduced the theoretical foundations of functional graphical models (FGMs), including Theorem \ref{th:regret}.
However, as we have shown in Theorem \ref{lemma:rotation}, their graph discovery heuristic does not solve the problem of learning the black-box function's structure.
On the other hand, we solved it by its integration of a pre-defined FGM into the architecture of our \emph{Cliqueformer}.
As such, the model learns the structure of the target function, as well as learns to predict its value, in synergy within an end-to-end training.

\section{Experiments}
\label{sec:exp}

In this section, we provide the empirical evaluation of Cliqueformer.
We begin by benchmarking Cliqueformer against prior methods on tasks from the MBO literature.
We then evaluate the effect of the novel FGM decomposition layer in Cliqueformer through an ablation study.
\begin{figure*}[t]
  \centering

  \begin{minipage}[t]{0.235\textwidth}
    \centering
    \includegraphics[width=\linewidth]{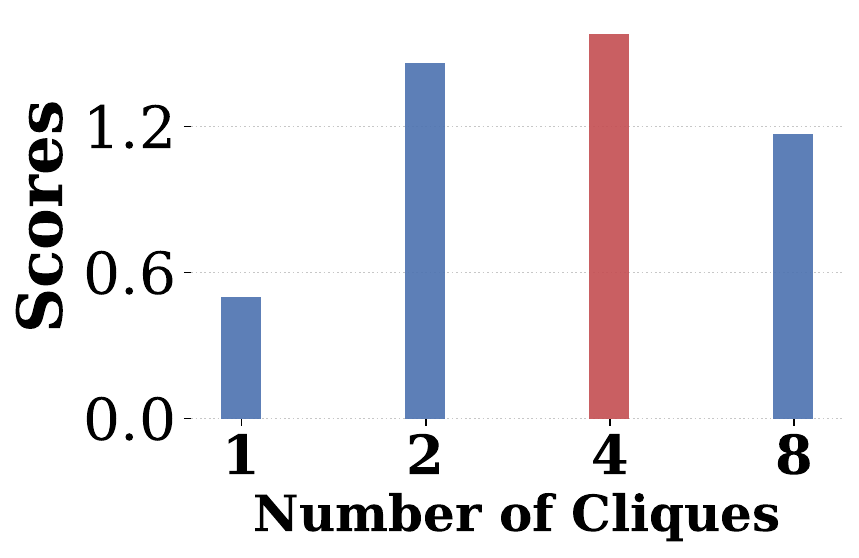}\\[-2pt]
    (a) TFBind-8 (\(d_{z}\) fixed)
  \end{minipage}\hfill
  \begin{minipage}[t]{0.235\textwidth}
    \centering
    \includegraphics[width=\linewidth]{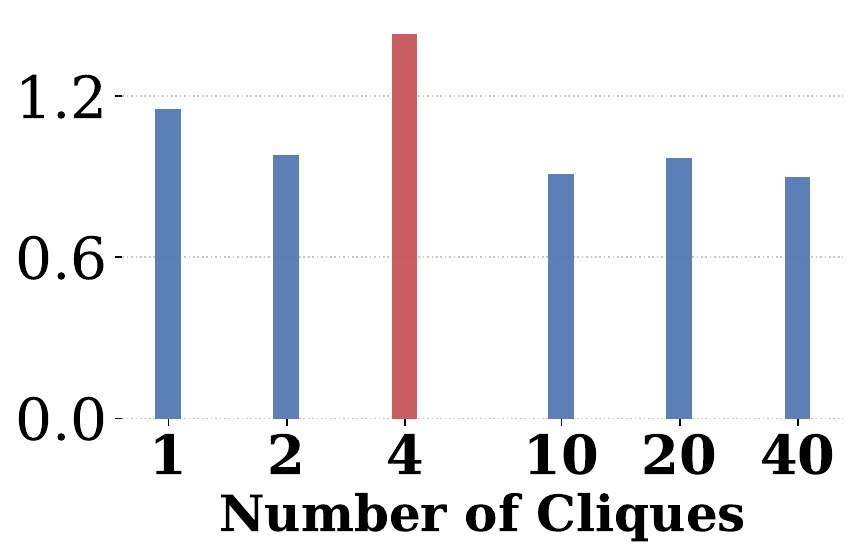}\\[-2pt]
    (b) Supercond. (\(d_{z}\) fixed)
  \end{minipage}\hfill
  \begin{minipage}[t]{0.235\textwidth}
    \centering
    \includegraphics[width=\linewidth]{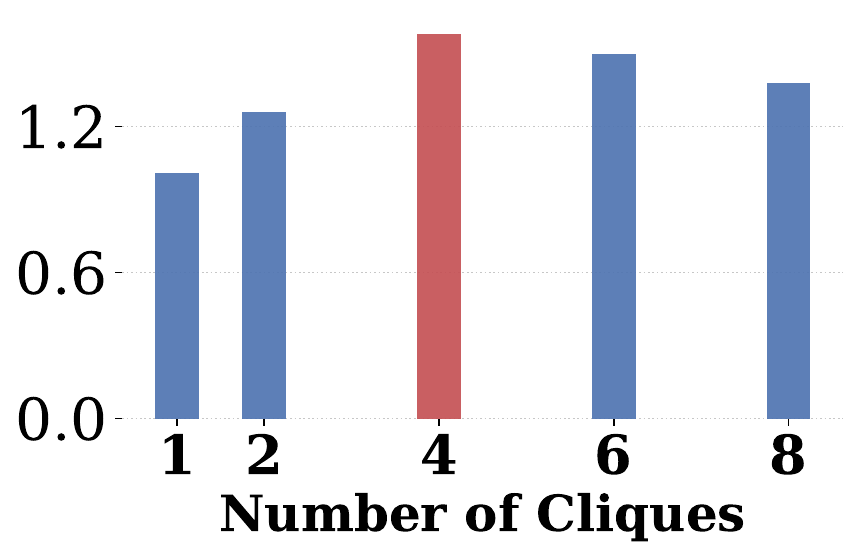}\\[-2pt]
    (c) TFBind-8 (\(d_{\text{clique}}\) fixed)
  \end{minipage}\hfill
  \begin{minipage}[t]{0.235\textwidth}
    \centering
    \includegraphics[width=\linewidth]{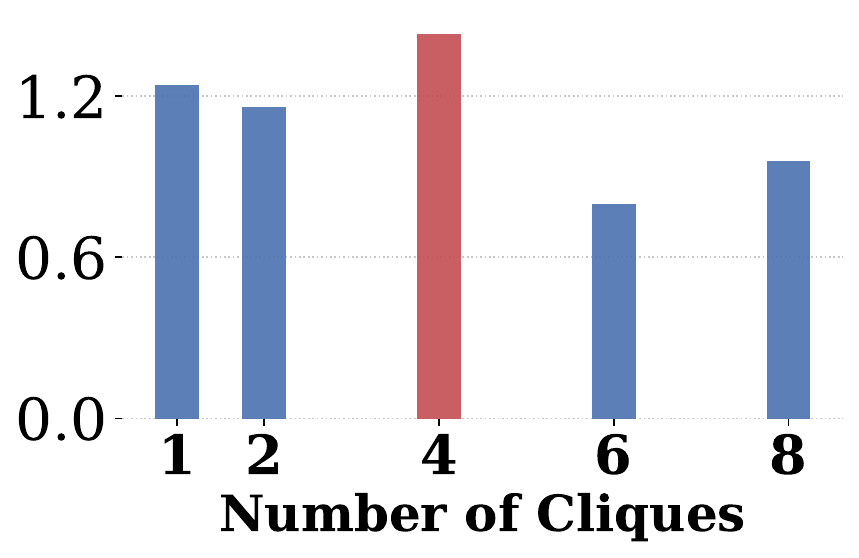}\\[-2pt]
    (d) Supercond. (\(d_{\text{clique}}\) fixed)
  \end{minipage}
    \caption{{Ablation experiments on the number of cliques $N_{\text{clique}}$ used in the FGM decomposition of Cliqueformer (the $y$-axis shows the design score and the $x$-axis shows the number of cliques used).
    In Figures (a) \& (b), for each task, we fixed the size $d_{z}$ of the latent variable $\rvz$ and varied the number of cliques $N_{\text{clique}}$. 
    In Figures (c) \& (d), for each task, we fixed the clique size $d_{\text{clique}}$, and varied the number of cliques $N_{\text{clique}}$.
    In every figure, the red bar corresponds to the best (maximal) score.}}
    \label{fig:abl}
\end{figure*}

\subsection{Benchmarking}
We compare our model to five classes of algorithms, each represented by proven prior methods.
As a \emph{na\"ive} baseline, we employ gradient ascent on a learned model (\emph{Grad. Asc.}).
To compare to \emph{exploratory} methods, we use Reward-Weighted Regression \citep[\emph{RWR}]{peters2007reinforcement} which learns by regressing the policy against its most promising perturbations.
To represent \emph{conservative} algorithms, we compare to Conservative Objective Models \citep[\emph{COMs}]{trabucco2021conservative, kumar2021data} and Invariant Objective Model \citep[\emph{IOM}]{qi2022data}.
As a \emph{conditional generative modeling} in MBO baseline, we use Denoising Diffusion Optimization Models \citep[DDOM]{pmlr-v202-krishnamoorthy23a}, as well as recent MatchOpt \citep{hoang2025learning}, which are inspired by generative diffusion models \citep{ho2020denoising}.
Additionally, to evaluate the importance of the novel elements of our architecture and training, we evaluate gradient ascent with the transformer backbone \citep[\emph{Transformer}]{vaswani2017attention}.  
We use standard hyperparameters for the baselines.
While Cliqueformer can be tuned for each task, we keep most of the hyperparameters the same---see Appendix D for more details.

Following prior work \citep{trabucco2022design}, we normalize scores to $[0, 1]$ based on the dataset's range and report the empirical top 1\%, estimated by averaging the top 10 designs from 1000 candidates across 5 seeds. Below, we describe benchmarks that we use for our experiments.

\paragraph{Latent Radial-Basis Functions (Lat. RBF).}
These synthetic tasks, introduced by \citet{grudzien2024functional}, stress-test MBO methods. Each datapoint is generated by sampling $\rvz \sim \mathcal{N}(0, I_{d_z})$ for $d_z \in {11, 31, 41, 61}$, applying RBFs over $d_C$-dimensional cliques of a fixed FGM to get $\ry$, and mapping $\rvz$ through a nonlinear transformation $\rvx = T(\rvz) \in \mathcal{T} \subset \mathbb{R}^d$ with $d > d_z$. Only $\rvx$ and $\ry$ are observed. Validity of a new design $\hat{\rvx}$ is judged by whether $\hat{\rvx}$ lies on the manifold $\mathcal{T}$, i.e., whether $T^{-1}(\hat{\rvx})$ exists. 
In our experiments, invalid designs score 0. 
These tasks also test a model’s ability to leverage RBF structure which becomes increasingly important as the task's dimension grows. 
As Table~\ref{tab:results} shows, Cliqueformer sustains strong performance as predicted by Theorem~\ref{th:regret}, whereas some baselines, like COMs, decline in higher dimensions.

\paragraph{Superconductor.}
This task involves designing an 81-dimensional material to maximize its critical temperature \citep{hamidieh2018data, fannjiang2020autofocused, trabucco2022design}, thus evaluating MBO methods in real-valued, continuous domains. Here, greedy optimization (e.g., gradient ascent) is highly effective. Though Cliqueformer is designed to be robust to distribution shift, its ability to “stitch” in-distribution cliques enables substantial gains. It outperforms all baselines, particularly COMs--—limited by its conservative updates—--and gradient ascent (w/ and wo/ Transformer), that lacks Cliqueformer's decomposition strategy.

\paragraph{TFBind-8 \& DNA Enhancers.}
These tasks optimize discrete DNA sequences of lengths 8 and 200, respectively \citep{trabucco2022design, uehara2024bridging}. TFBind-8 measures binding affinity for a transcription factor, while DNA Enhancers target HEPG2 and k562 expression levels. TFBind-8, though simple, demonstrates Cliqueformer's ability to solve discrete tasks. For DNA Enhancers—--large-scale tasks with $\sim 2\times10^5$ samples—--Cliqueformer matches the strong performance of gradient ascent on HEPG2 and significantly outperforms all baselines on k562, showing its scalability. Averaged across all tasks, Cliqueformer achieves the highest overall performance.

\subsection{Ablations}


While some components of Cliqueformer, such as transformer blocks \citep{vaswani2017attention} and variational-information bottlenecks \citep{kingma2013auto, alemi2016deep}, are well-known deep learning tools, the decomposing mechanism of our model is novel.
In Figure (\ref{fig:abl}), we verify the utility of the decomposing mechanism with an ablation study, in which we sweep over the number of cliques $N_{\text{clique}}$ of Cliqueformer in the discrete TFBind-8 and continuous Superconductor tasks. 
In each of the tasks, we fix the size $d_{z}$ of the latent variable $\rvz$ so that it approximately matches the dimensionality of the data, and sweep over the number of cliques into which it can be decomposed under the formula $d_{z}=d_{\text{knot}} + N_{\text{clique}} (d_{\text{clique}} - d_{\text{knot}})$.
We cover the case $N_{\text{clique}}=1$ to compare Cliqueformer to an FGM-oblivious VAE with transformer backbone.

Results in Figure (\ref{fig:abl}) demonstrate the essential role of FGM decomposition in achieving superior performance across both tasks. The results reveal a clear trade-off in parameter selection: with fixed $d_{z}$, too few cliques (particularly $N_{\text{clique}}=1$) prevent effective function decomposition, while too many cliques oversimplify the target function.
We further investigated the impact of varying the number of cliques while maintaining fixed clique sizes ($d_{\text{clique}}=3$ for TFBind-8 and $d_{\text{clique}}=21$ for Superconductor). Figure (\ref{fig:abl}) also confirms that the optimal $N_{\text{clique}}$ values remain unchanged, likely because deviating from these values results in latent spaces that are either too constrained or too expansive relative to the data dimensionality.

In our experiments, we consistently obtained good performance by setting the clique size to $d_{\text{clique}}=3$ and choosing the number of cliques so that the total latent dimension approximately matches that of the design.
In DNA Enhancers, we doubled the clique size and halved the number of cliques to decrease the computational cost of attention layers.
We offer more ablation studies in Appendix C,
and we include a detailed report of hyperparameters in Appendix D.
An in-depth analysis of the relation between hyperparameters and the performance is an exciting avenue of future work.

\section{Conclusion}
\label{sec:conc}
In this work, we introduced Cliqueformer, a scalable neural architecture for offline model-based optimization that leverages functional graphical models to learn the structure of black-box target functions. By incorporating recent theoretical advances in MBO, our model eliminates the need for explicit conservative regularization or iterative retraining when proposing designs. Cliqueformer achieves and maintains strong performance across all evaluation tasks, paving the way for research into scaling MBO with large neural networks and expanded design datasets.

\bibliography{aaai2026}

\clearpage

\appendix
\onecolumn

\section{More Related Work}
\label{ap:rel}
Several reinforcement learning (RL) approaches have been explored extensively for biological sequence design. DyNA-PPO \citep{angermueller2019model} leverages proximal policy optimization \citep{schulman2017proximal} with a model-based variant to improve sample efficiency in the low-round setting typical of wet lab experiments. PEX, also resembling the PPO \citep{schulman2017proximal} learning style, \citep{ren2022proximal} prioritizes local search through directed evolution \citep{arnold1998design} while using a specialized architecture for modeling fitness landscapes. FBGAN \citep{gupta2018feedback} introduces a feedback loop mechanism to optimize synthetic gene sequences using an external analyzer. However, these methods fundamentally rely on active learning and iterative refinement through oracle queries - DyNA-PPO requires simulator fitting on new measurements, PEX conducts proximal exploration, and FBGAN uses feedback loops with an external analyzer. This makes them unsuitable for offline MBO settings where no additional queries are allowed. Furthermore, while these approaches are specialized for biological sequences, offline MBO aims to tackle a broader class of design problems through static dataset learning.

\section{Theoretical details}
\label{ap:the}
The full statement of the following theorem considers an MBO algorithm with function clas $\mathcal{F}=\mathcal{F}_{C_1} \oplus ... \oplus \mathcal{F}_{C_{N_{\text{clique}}}}$, so that every of its element has form 
\begin{center}
    $f(\rvx) = \sum_{i=1}^{N_{\text{clique}}} f_{C_i}(\rvx_{C_i})$.
\end{center}
As described in Section \ref{sec:cliqueformer}, Cliqueformer's architecture forms such a function class on top of the learned latent space.
We define the statistical constant as 
\begin{center}
    $C_{\text{stat}} = \sqrt{\frac{1}{1-\sigma}}$, \quad 
    where 
    $\sigma = \max\limits_{C_i \neq C_j, \hat{f}_{C_i}, \hat{f}_{C_j}} \mathbb{C}orr_{\rvx \sim p}[ \hat{f}_{C_i}(\rvx_{C_i}), \hat{f}_{C_j} ]$\\
\end{center}
and the function approximation complexity constant as 
\begin{center}
    $C_{\text{cpx}} = \sqrt{ \frac{ N_{\text{clique}} \sum_{i=1}^{N_{\text{clique}}} \log(|\mathcal{F}_{C_i}|/\delta)  }{ N } }$,
\end{center}
where $\delta$ is the PAC error probability \citep{shalev2014understanding}.

\regret*

\rotation*
\begin{proof}
Since the density of $\rvx$ is positive and continuous, we can form a bijection that maps $\rvx$ to another random variable $\rvz \in \mathbb{R}^{l}$, where $l \leq d$, that follows the standard-normal distribution \citep[Appendix E]{dai2019diagnosing}. 
We denote this bijection as $Z(\rvx)$.
Let us define
\begin{center}
   $\ry=f^{z}(\rvz) = \exp\big(\frac{1}{\sqrt{l}}\sum_{i=1}^{l} \rz_i \big)$.  
\end{center}
Then, the FGM of $f^{z}$ has an edge between every two variables since each variable's partial derivative 
\begin{center}
    $\frac{\partial f^{z}}{\partial \rz^{i}} = \frac{1}{\sqrt{l}}\exp\big(\frac{1}{\sqrt{l}}\sum_{i=1}^{d} \rz_i \big)$
\end{center}
is also a function of all others \citep[Lemma 1]{grudzien2024functional}. 
Consider now a rotation $\rho: \rvz \mapsto \rvv = (\rv_1, \dots, \rv_l)$ such that $\rv_1 = \frac{1}{\sqrt{l}}\sum_{i=1}^{l}\rz_i$.  
Then, $\rvv \sim N(0_l, I_l)$, and $\ry$ can be expressed in terms of $\rvv$ as $\ry=f^{v}(\rvv) = \exp(\rv_1)$.
Then, the FGM of $f^{v}$ has no edges, since it depends on only one variable, inducing no interactions between any two variables. 
Recall that $\rvx = Z^{-1}(\rvz)$.
Then, $\rvx$ be represented by standard-normal $\rvz$ and $\rvv$, obtainable by
\begin{center}
    $\rvz=Z(\rvx)$ and $\rvv = \rho(\rvz) = \rho\big( Z(\rvx) \big)$.
\end{center}

Furthermore, we can define
\begin{center}
    $f(\rvx) = f^{z}\big(Z(\rvx)\big)$
\end{center}
which is identically equal to $f^{z}(\rvz)$ and $f^{v}(\rvv)$, which have a complete and an empty FGM, respectively, thus fulfilling the theorem's claim.
\end{proof}

\paragraph{Computational complexity.} Below (Table \ref{tab:complexities}), we list the computational complexities, as the order of the number of FLOPs, for each method's training step and design optimization phase, as a function of batch size $B$, number of model layers $L$, model's hidden dimension $H$, number of exploratory perturbations $P$, number of adversarial training sub-steps $A$, number of design optimization steps $T$, and the number of cliques in an FGM-based model $C$. Note that the majority of quadratic terms, such as $H^2$ and $C^2$ do not influence runtime much if parallelized on a GPU/TPU. We print in bold terms, such as $\mathcal{\bold{T}}$, that contribute to the complexity with sequential operations, thus inevitably affecting the runtime.

\begin{table}[ht]
\centering
\begin{tabular}{lcc}
\toprule
\textbf{Method} & \textbf{Training Step Complexity} & \textbf{Design Complexity} \\
\midrule
Grad Asc. & $\mathcal{O}(BLH^2)$ & $\mathcal{O}(\mathbf{T}LH^2)$ \\
RWR & $\mathcal{O}(BLH^2)$ & $\mathcal{O}(\mathbf{T}PLH^2)$ \\
COMs & $\mathcal{O}((\mathbf{A}+B)LH^2)$ & $\mathcal{O}(\mathbf{T}LH^2)$ \\
DDOM & $\mathcal{O}(BLH^2)$ & $\mathcal{O}(\mathbf{T}LH^2)$ \\
IOM & $\mathcal{O}(BLH^2)$ & $\mathcal{O}(\mathbf{T}LH^2)$ \\
Match-Opt & $\mathcal{O}(B\mathbf{P}LH^2)$ & $\mathcal{O}(\mathbf{T}LH^2)$ \\
Transformer & $\mathcal{O}(BLHD(H+D))$ & $\mathcal{O}(\mathbf{T}LHD(H+D))$ \\
Cliqueformer & $\mathcal{O}(BLH(D(H+D) + C(H+C)))$ & $\mathcal{O}(\mathbf{T}LH(D(H+D) + C(H+C)))$ \\
\bottomrule
\end{tabular}
\caption{Computational complexities (in terms of FLOPs) of methods from Section \ref{sec:exp}.}
\label{tab:complexities}
\end{table}

\newpage

\section{Extra Experiments}
\label{app:extra-ablations}

\paragraph{Complete results.} In this section, because of the format constraints on the main paper, we re-report the results from Table \ref{tab:results}, additionally including the standard deviation across seeds, in Table \ref{tab:all-results}.
\begin{table*}[!t]
    \centering
    \setlength{\tabcolsep}{5pt}
    \begin{tabular}{l c c c c c c c c}
        \textbf{Task} & \textbf{Grad.Asc.} & \textbf{RWR} & \textbf{IOM} & \textbf{COMs} & \textbf{DDOM} & \textbf{MatchOpt} & \textbf{Transformer} & \textbf{Cliqueformer} \\ 
        \hline
        Lat. RBF 11 & 0.00 \tiny$\pm$ 0.00 & 0.08 \tiny$\pm$ 0.08 & 0.00 \tiny$\pm$ 0.00 & \textbf{0.66} \tiny$\pm$ 0.04 & 0.00 \tiny$\pm$ 0.00 & 0.56 \tiny$\pm$ 0.01 & 0.47 \tiny$\pm$ 0.05 & 0.65 \tiny$\pm$ 0.07 \\ 
        Lat. RBF 31 & 0.00 \tiny$\pm$ 0.00 & 0.31 \tiny$\pm$ 0.10 & 0.00 \tiny$\pm$ 0.00 & 0.50 \tiny$\pm$ 0.05 & 0.00 \tiny$\pm$ 0.00 & \textbf{0.66} \tiny$\pm$ 0.01 & 0.00 \tiny$\pm$ 0.00 & 0.64 \tiny$\pm$ 0.05 \\ 
        Lat. RBF 41 & 0.00 \tiny$\pm$ 0.00 & 0.35 \tiny$\pm$ 0.08 & 0.00 \tiny$\pm$ 0.00 & 0.45 \tiny$\pm$ 0.06 & 0.00 \tiny$\pm$ 0.00 & 0.32 \tiny$\pm$ 0.05 & 0.20 \tiny$\pm$ 0.01 & \textbf{0.66} \tiny$\pm$ 0.05 \\ 
        Lat. RBF 61 & 0.00 \tiny$\pm$ 0.00 & 0.29 \tiny$\pm$ 0.10 & 0.00 \tiny$\pm$ 0.00 & 0.25 \tiny$\pm$ 0.04 & 0.00 \tiny$\pm$ 0.00 & \textbf{0.74} \tiny$\pm$ 0.02 & 0.16 \tiny$\pm$ 0.03 & 0.66 \tiny$\pm$ 0.05 \\ 
        Superconductor & 1.13 \tiny$\pm$ 0.08 & 1.03 \tiny$\pm$ 0.07 & 1.03 \tiny$\pm$ 0.06 & 0.97 \tiny$\pm$ 0.08 & 1.22 \tiny$\pm$ 0.08 & 0.84 \tiny$\pm$ 0.05 & 0.96 \tiny$\pm$ 0.05 & \textbf{1.43} \tiny$\pm$ 0.04 \\ 
        TF-Bind-8 & 0.99 \tiny$\pm$ 0.00 & \textbf{1.58} \tiny$\pm$ 0.03 & 0.99 \tiny$\pm$ 0.00 & 1.57 \tiny$\pm$ 0.02 & 1.55 \tiny$\pm$ 0.03 & 1.47 \tiny$\pm$ 0.08 & 1.48 \tiny$\pm$ 0.03 & \textbf{1.58} \tiny$\pm$ 0.01 \\ 
        DNA HEPG2 & \textbf{2.16} \tiny$\pm$ 0.07 & 1.91 \tiny$\pm$ 0.12 & 1.97 \tiny$\pm$ 0.12 & 1.20 \tiny$\pm$ 0.09 & 1.82 \tiny$\pm$ 0.10 & 1.72 \tiny$\pm$ 0.11 & 2.13 \tiny$\pm$ 0.06 & 2.10 \tiny$\pm$ 0.07 \\ 
        DNA k562 & 2.11 \tiny$\pm$ 0.06 & 1.91 \tiny$\pm$ 0.11 & 2.62 \tiny$\pm$ 0.20 & 1.80 \tiny$\pm$ 0.12 & 2.61 \tiny$\pm$ 0.21 & 2.25 \tiny$\pm$ 0.18 & 2.60 \tiny$\pm$ 0.19 & \textbf{3.15} \tiny$\pm$ 0.07 \\ 
        \hline 
        Ave.score $\uparrow$ & 0.80 & 0.93 & 0.83 & 0.93 & 0.90 & 1.07 & 1.00 & \textbf{1.36} \\ 
        Ave.rank $\downarrow$ & 5.00 & 4.00 & 5.00 & 4.38 & 4.75 & 4.38 & 4.63 & \textbf{1.63} \\ 
    \end{tabular}
    \caption{{ Comparison of Cliqueformer and the baselines. Each score is the empirical estimate of the expected top 1\% of scores, estimated by averaging the top-10 values out of 1000 generated designs, averaged over 5 runs. Scores are min-max normalized using the min and max from the dataset, ensuring that dataset designs fall in $[0, 1]$. Unlike \cite{trabucco2022design}, we do not use test data for normalization, improving interpretability (see Appendix~\ref{ap:exp}). Standard deviations are computed for each of the top-10 samples and averaged across runs. We report both average score (higher is better) and average rank (lower is better).}}
    \label{tab:all-results}
\end{table*}

\paragraph{Ablations.} We ablate the components of our model, and its design optimization algorithm, that served as regularizers 
and were inspired theoretically.
That is, we examine the VIB term in the loss from Equation (\ref{eq:loss}), and the weight decay step in Algorithm \ref{algorithm:bbo-cliqueformer}. 
We have not run the ablations for DNA Enhancers tasks due to their high computational cost - around 11 hours on a Google TPU v3-8 for a single run.
The results are reported in Table \ref{table:ablation}.

\begin{table*}[!htbp]
\centering
\setlength{\tabcolsep}{4pt}
\resizebox{\textwidth}{!}{
\begin{tabular}{l|c c c c c c|c c}
Version \textbackslash Task  & Lat. RBF 11 & Lat. RBF 31 & Lat. RBF 41 & Lat. RBF 61 & Superconductor & TF-Bind-8 & Ave.Score $\uparrow$ & Ave.rank $\downarrow$\\
\hline
Base & 0.65 \tiny$\pm$ 0.07 & 0.64 \tiny$\pm$ 0.05 & 0.66 \tiny$\pm$ 0.05  & \textbf{0.66} \tiny$\pm$ 0.05  & 1.43 \tiny$\pm$ 0.04 & \textbf{1.58} \tiny$\pm$ 0.01 & \textbf{0.94} & \textbf{1.83}\\
\hline
No VIB & 0.64 \tiny$\pm$ 0.06 & 0.68 \tiny$\pm$ 0.05 & \textbf{0.67} \tiny$\pm$ 0.04  & \textbf{0.66} \tiny$\pm$ 0.05  & 0.92 \tiny$\pm$ 0.02 & 1.47 \tiny$\pm$ 0.06 & 0.84 & 2.17\\
No Weight Decay & \textbf{0.66} \tiny$\pm$ 0.06 & \textbf{0.69} \tiny$\pm$ 0.06 & 0.64 \tiny$\pm$ 0.07  & 0.63 \tiny$\pm$ 0.05  & \textbf{1.45} \tiny$\pm$ 0.04 & 1.48 \tiny$\pm$ 0.06 & 0.93 & 2.00\\
\end{tabular}
}
\caption{Comparison of Cliqueformer (the \emph{Base} version) to ablative versions---one without the VIB term (\emph{No VIB}) in Equation \ref{eq:loss}, and one where the model was trained just like the base version but the design optimizer does not use weight decay (\emph{No Weight Decay}). }
\label{table:ablation}
\end{table*}
We observe that taking out the VIB term can significantly harm the performance, as showcased by the Superconductor task. 
If the term is kept but the weight decay step is removed from the design optimizer, the final results slightly underperform the full model and algorithm. 
Thus, we recommend to keep these regularizers.

\paragraph{Different metric.} In the main body of the paper, the results in Table \ref{tab:results} are our estimates of the expected top 1\% of the distribution of the design values, computed by taking the average of top 10 values from a batch of 1000 designs, averaged over 5 random seeds. Our rationale behind it was to establish a more concrete metric of evaluation, like the expected top 1\%, instead of reporting the maximum of 128 designs that is more common in recent literature \citep{trabucco2022design, pmlr-v202-krishnamoorthy23a}. In fact, we argue that top 1 of 128 is an estimate of the top 1\% too (since 1/128=0.78\%), but one with more bias and variance. 
However, for completeness, we report results with this metric (top 1 of 128, averaged over random seeds) in Table \ref{tab:results-128}.

\begin{table*}[h!]
    \centering
    \setlength{\tabcolsep}{6pt}
    \begin{tabular}{l c c c c c c}
        Task & Grad.Asc. & RWR & COMs & DDOM & Transformer & Cliqueformer\\ 
        \hline
        Lat. RBF 11 & 0.00 & 0.00 & \textbf{0.74} & 0.00 & 0.48 & 0.72 \\ 
        Lat. RBF 31 & 0.00 & 0.00 & 0.49 & 0.00 & 0.00 & \textbf{0.65} \\ 
        Lat. RBF 41 & 0.00 & 0.00 & 0.41 & 0.00 & 0.00 & \textbf{0.66} \\ 
        Lat. RBF 61 & 0.00 & 0.00 & 0.00 & 0.00 & 0.00 & \textbf{0.62} \\ 
        Superconductor & 1.19 & 1.00 & 0.92 & 1.20 & 0.91 & \textbf{1.42} \\ 
        TF-Bind-8 & 0.99 & \textbf{1.60} & 1.56 & 1.55 & 1.52 & 1.48 \\ 
        \hline 
        Ave.score $\uparrow$ & 0.36 & 0.43 & 0.70 & 0.46 & 0.49 & \textbf{1.09} \\ 
        Ave.rank $\downarrow$ & 3.33 & 2.83 & 2.33 & 3.00 & 3.50 & \textbf{1.83} \\ 
    \end{tabular}
    \caption{ {\small Experimental results of Cliqueformer and the baselines.
    Each score is the maximum of values of 128 designs, averaged over 5 runs.
    }}
    \label{tab:results-128}
\end{table*}

For comparison, we bring back results reported in Section \ref{sec:exp} in Table \ref{tab:results-10-1000}.

\begin{table*}[h!]
    \centering
    \setlength{\tabcolsep}{6pt}
    \begin{tabular}{l c c c c c c}
        Task & Grad.Asc. & RWR & COMs & DDOM & Transformer & Cliqueformer\\ 
        \hline
        Lat. RBF 11 & 0.00 & 0.08 & \textbf{0.66} & 0.00 & 0.47 & 0.65 \\ 
        Lat. RBF 31 & 0.00 & 0.31 & 0.50 & 0.00 & 0.00 & \textbf{0.64} \\ 
        Lat. RBF 41 & 0.00 & 0.35 & 0.45 & 0.00 & 0.20 & \textbf{0.66} \\ 
        Lat. RBF 61 & 0.00 & 0.29 & 0.25 & 0.00 & 0.16 & \textbf{0.66} \\ 
        Superconductor & 1.13 & 1.03 & 0.97 & 1.22 & 0.96 & \textbf{1.43} \\ 
        TF-Bind-8 & 0.99 & \textbf{1.58} & 1.57 & 1.55 & 1.48 & \textbf{1.58} \\ 
        \hline 
        Ave.score $\uparrow$ & 0.35 & 0.61 & 0.73 & 0.46 & 0.55 & \textbf{1.10} \\ 
        Ave.rank $\downarrow$ & 4.67 & 2.83 & 2.67 & 4.17 & 4.33 & \textbf{1.17} \\ 
    \end{tabular}
    \caption{ {\small Experimental results of Cliqueformer and the baselines from Section \ref{sec:exp}.
    Each score is the empirical estimate of the top 1\% by averaging top 10 values out of 1000 samples, and averaging over 5 runs.
    }}
    \label{tab:results-10-1000}
\end{table*}

We observe very similar scores to those in Table \ref{tab:results}, but with the variance effect being visible: for example, RWR get poorer results in Lat. RBF 11 (0.00 as compared to 0.08), but stronger in TFBind8 (1.60 as compared to 1.58).

\paragraph{Lat. RBF Validity Scores.} We re-report, in Table \ref{tab:percent}, the evaluation results on Latent RBF tasks, this time additionally reporting the percentage of \emph{valid} designs---the proportions of designs $\rvx$, in the final sample, for which $T^{-1}(\rvx)$ is well-defined.
We find that, while half of the baselines tends to easily slip outside of the valid manifold, Cliqueformer steadily produces designs whose majority are valid. It is worth noting, however, that MathOpt attains far higher validity score, while also displaying larger variance of the design value scores. We suspect that this is due to their generative-style approach, wherein a (score-maximizing) gradient field is learned \cite{ho2020denoising}, which enables the designs to stay closer to the training distribution. Hence, higher average validity than Cliqueformer (74\% vs. 73\%), yet lower scores (0.57 vs. 0.65).  
\begin{table*}[!t]
    \centering
    \small
    \setlength{\tabcolsep}{6pt}
    \begin{tabular}{l c c c c c c c c}
        \textbf{Task} & \textbf{Grad.Asc.} & \textbf{RWR} & \textbf{IOM} & \textbf{COMs} & \textbf{DDOM} & \textbf{MatchOpt} & \textbf{Transformer} & \textbf{Cliqueformer} \\
        \hline
        Lat. RBF 11 & 0.00 (0\%) & 0.08 (1\%) & 0.00 (0\%) & \textbf{0.66} (72\%) & 0.00 (0\%) & 0.56 (98\%) & 0.47 (60\%) & 0.65 (74\%)\\ 
        Lat. RBF 31 & 0.00 (0\%) & 0.31 (3\%) & 0.00 (0\%) & 0.50 (32\%) & 0.00 (0\%) & \textbf{0.66} (92\%) & 0.00 (0\%) & \textbf{0.64} (76\%)\\ 
        Lat. RBF 41 & 0.00 (0\%) & 0.35 (3\%) & 0.00 (0\%) & 0.45 (16\%) & 0.00 (0\%) & 0.32 (58\%) & 0.20 (75\%) & \textbf{0.66} (72\%)\\ 
        Lat. RBF 61 & 0.00 (0\%) & 0.29 (4\%) & 0.00 (0\%) & 0.25 (7\%) & 0.00 (0\%) & \textbf{0.74} (98\%) & 0.16 (64\%) & 0.66 (68\%)\\ 
    \end{tabular}
    \caption{ {\small Experimental results of Cliqueformer and the baselines. We report the average score of top-10 of 1000 designs and their validity scores---the fraction of valid designs in the sample of 1000, averaged over 5 seeds.}
    }
    \label{tab:percent}
\end{table*}

\begin{figure*}[t]
  \centering

  \begin{minipage}[t]{0.31\textwidth}
    \centering
    \includegraphics[width=\linewidth]{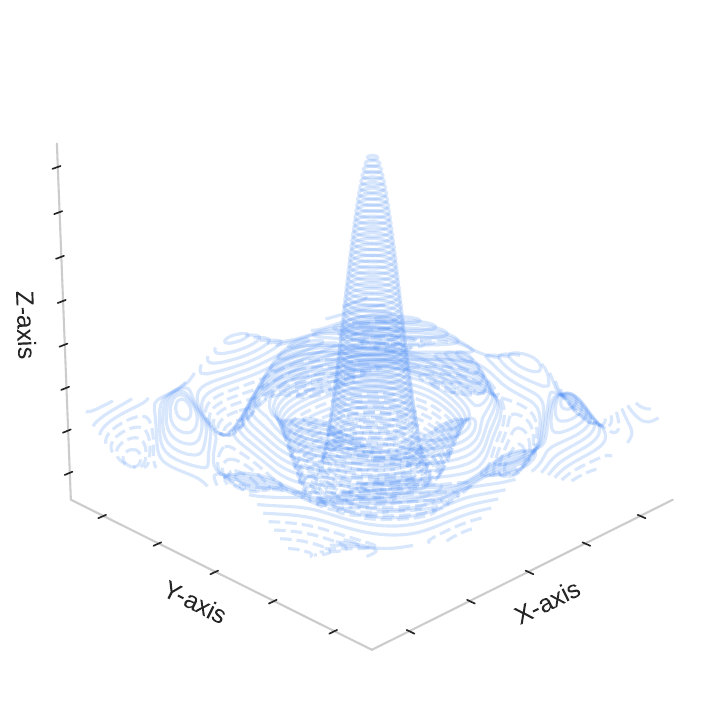}\\[-2pt]
    \footnotesize (a) Contours in XY-planes
  \end{minipage}\hfill
  \begin{minipage}[t]{0.31\textwidth}
    \centering
    \includegraphics[width=\linewidth]{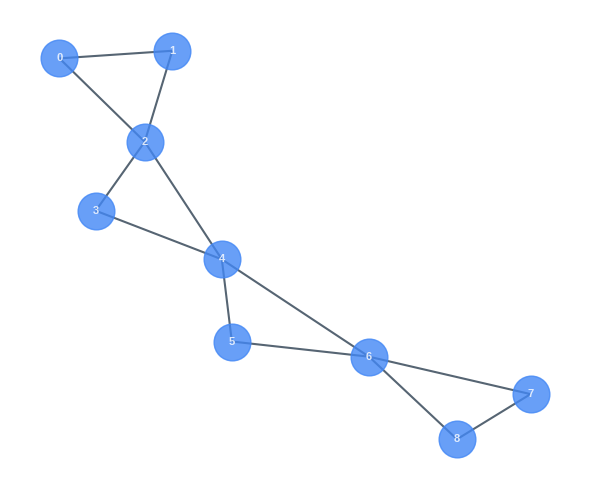}\\[-2pt]
    \footnotesize (b) Chain-of-triangles FGM
  \end{minipage}\hfill
  \begin{minipage}[t]{0.31\textwidth}
    \centering
    \includegraphics[width=\linewidth]{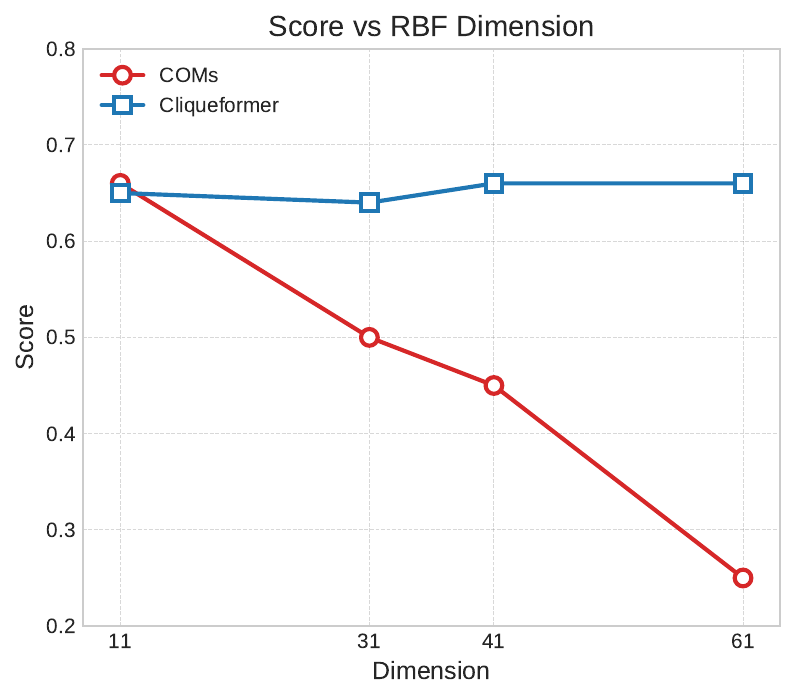}\\[-2pt]
    \footnotesize (c) Score vs. dimension
  \end{minipage}
    \caption{
        The first building block of the Lat. RBF tasks are 3D radial-basis functions (left). 
        These functions are applied to triplets arranged in a chain of triangles FGM (center) and linearly mixed. 
        Then, observable designs are produced with non-linear transformations of the chain and, together with their values, form a dataset.
        We show the score (right) of our structure-learning Cliqueformer and structure-oblivious COMs \citep{trabucco2021conservative}, against the dimension of Lat. RBF functions, modulated only by varying the number of triangles. 
        Cliqueformer, unlike COMs, sustains strong performance across all dimensions.
        More results in Section \ref{sec:exp}.
    }
    \label{fig:rbf-scores}
\end{figure*}

\section{Experimental details}
\label{ap:exp}
\paragraph{Datasets.} We use the implementation of \cite{grudzien2024functional} to generate data with latent radial-basis functions (see Figure \ref{fig:rbf-scores} for details). 
Also, we initially wanted to use Design Bench \citep{trabucco2022design} for experiments with practical tasks.
However, at the time of this writing, the benchmark suite was suffering a data loss and was not readily available.
To overcome it, we manually found the data and implemented dataset classes. 
TFBind-8 \citep{trabucco2022design} could be fully downloaded since the number of possible pairs $(\rvx, \ry)$ is quite small. 
Hence, a design can be evaluated by looking up its score in the dataset.
For Superconductor \citep{hamidieh2018data}, we pre-trained an XGBoost oracle on the full dataset, and trained our model and the baselines to predict the labels produced by the oracle.
The proposed designs of the tested models are evaluated by calling the oracle as well.
We obtained DNA Enhancers dataset from the code of \cite{uehara2024understanding}, available at 
\begin{center}
    {\small \url{https://github.com/masa-ue/RLfinetuning_Diffusion_Bioseq/tree/master}.}
\end{center}
Following the procedure in 
\begin{center}
    {\small \url{https://github.com/masa-ue/RLfinetuning_Diffusion_Bioseq/blob/master/tutorials/Human-enhancer/1-Enhancer_data.ipynb}.}
\end{center}
we additionally filter the dataset to keep only sequences featured by chromosomes from 1 to 4.
We use their pre-trained oracle for generating labels and evaluation of proposed designs.
Following \cite{fannjiang2020autofocused} and \cite{trabucco2022design}, we train our models on the portions of the datasets with values below their corresponding $80^{th}$.
Upon evaluation, we obtain the ground-truth/oracle value of the proposed design $\ry$, and normalize it as
\begin{align}
    \bar{\ry} = \frac{\ry - \ry_{\min}}{\ry_{\max} - \ry_{\min}}, \nonumber 
\end{align}
and report $\bar{\ry}$. $\ry_{\min}$ and $\ry_{\max}$ are the minimum and the maximum of the training data. 
This normalization scheme is different than, for example, the one in the work by \cite{trabucco2022design}. 
We choose this scheme due to its easy interpretability---a score of $\bar{\ry} > 1$ implies improvement over the given dataset, which is the ultimate objective of MBO methods.
However, we note that a score of less than $1$ does not imply failute of the algorithm, since we initialize our designs at a random sample from the dataset, which can be arbitrarily low-value or far from the optimum.
For some functions, like in latent RBFs, the optima are very narrow spikes in a very high-dimensional space, being nearly impossible to find (see Figure \ref{fig:contour}).
We choose such an evaluation scheme due to its robustness that allows us to see how good ut improving any design our algorithms are overall.

\paragraph{Hyper-parameters.} 
For baselines, we use hyper-parameters suggested by \cite{trabucco2021conservative}. 
We decreased the hidden layer sizes (at no harm to performance) for Lat. RBF 31 and DNA Enhancers tasks where the performance was unstable with larger sizes. 
Also, we haven't tuned most of the Cliqueformer's hyper-parameters \emph{per-task}.
We found, however, as set of hyper-parameters that works reasonably well on all tasks.

On all tasks, we use 2 transformer blocks in both the encoder and the decoder, with transformer dimension of 64, and 2-head attention.
The predictive model $f_{\theta}(\rvz)$ is a multi-layer perceptron with 2 hidden layers of dimension 256. 
We change it to 512 only for DNA Enhancers.
The best activation function we tested was GELU \citep{hendrycks2016gaussian}, and LeakyReLU(0.3) gives similar results.
We use dropout of rate 0.5 \citep{srivastava2014dropout}.
In all tasks, weight of the MSE term to $\tau=$10 (recall Equation (\ref{eq:loss})).
Additionally, we warm up our VIB term linearly for $1000$ steps (with maximal coefficient of 1). 
We train the model with AdamW \citep{loshchilov2017fixing} with the default weight decay of Pytorch \citep{paszke2019pytorch}.
We set the model learning rate to 1e-4 and the design learning rate to 3e-4 in all tasks.
We train the design with AdamW with high rates of weight decay (ranging from 0.1 to 0.5).

In all tasks, we wanted to keep the dimension of the latent variable $\rvz$ more-less similar to the dimension of the input variable $\rvx$, and would decrease it, if possible without harming performance, to limit the computational cost of the experiments.
The dimension of $\rvz$ can be calculated from the clique and knot sizes as 
\begin{center}
    $d_z = d_{\text{knot}} + N_{\text{clique}} \cdot (d_{\text{clique}} - d_{\text{knot}})$.
\end{center}
In most tasks, we used the clique dimension $d_{\text{clique}}=3$ with knot size of $d_{\text{knot}}=1$.
We made an exception for Superconductor, where we found a great improvement by setting $d_{\text{clique}}=21$ and $N_{\text{clique}}=4$ (setting $d_{\text{clique}}=3$ and $N_{\text{clique}}=40$ gives score of 0.99); and DNA Enhancers, where we doubled the clique size (to 6) and halved the number of cliques to (40), to lower the computational cost of attention.
In DNA Enhancers tasks, we additionally increased the MLP hidden dimension to 512 due to greater difficulty of modeling high-dimensional tasks. 
We summarize the task-specific hyper-parameters in Table \ref{table:config}.

We want to note that these hyper-parameters are not optimal per-task. 
Rather, we chose schemes that work uniformly \emph{well enough} on all tasks.
However, each task can benefit from further alteration of hyper-parameters.
For example, we observed that Lat. RBF tasks benefit from different numbers of design steps; for Lat. RBF 41, we found the optimal number to be 400; for Superconductor, it seems to be 200.
Due to time constraints, we have not exploited scalability of Cliqueformer in DNA Enhancers tasks, but observed pre-training losses to decrease more with increased parameter count and training duration.

\begin{table*}[htbp]
\centering
\begin{tabular}{l|c c c c c}
\textbf{Task} & \textbf{N\_clique} & \textbf{d\_clique} & \textbf{MLP dim} & \textbf{design steps} & \textbf{Weight decay}  \\
\hline
\multirow{1}{*}{\textbf{Lat. RBF 11}} & 10 & 3 & 256 & 50 & 0.5 \\
\multirow{1}{*}{\textbf{Lat. RBF 31}} & 18 & 3 & 256 & 50 & 0.5 \\
\multirow{1}{*}{\textbf{Lat. RBF 41}} & 20 & 3 & 256 & 50 & 0.5 \\
\multirow{1}{*}{\textbf{Lat. RBF 61}} & 28 & 3 & 256 & 50 & 0.5 \\
\multirow{1}{*}{\textbf{TFBind-8}} & 4 & 3 & 256 & 1000 & 0.5 \\
\multirow{1}{*}{\textbf{Superconductor}} & 4 & 21 & 256 & 1000 & 0.5 \\
\multirow{1}{*}{\textbf{DNA Enhancers}} & 40 & 6 & 512 & 1000 & 0.1 \\
\end{tabular}
\caption{Hyper-parameter configuration for different benchmark tasks.}
\label{table:config}
\end{table*}

\section*{Links}
We provide an overview of our work on the project website at 
\begin{center}
    {\small \url{https://sites.google.com/berkeley.edu/cliqueformer/home}.}
\end{center}
In particular, we release our code at 
\begin{center}
    {\small \url{https://github.com/znowu/cliqueformer-code}}.
\end{center}

\end{document}